\documentclass[letterpaper]{article} 
\usepackage[preprint]{aaai2027}    
\usepackage[hyphens]{url}            
\usepackage{graphicx}                
\usepackage{natbib}                  
\usepackage{caption}                 
\usepackage{booktabs}

\usepackage{amsmath}
\usepackage{amssymb}
\usepackage{amsfonts}
\usepackage{amsthm}
\usepackage{dsfont}       
\usepackage{bm} 
\usepackage{nicefrac}
\usepackage{multirow}
\usepackage{xcolor}
\usepackage{tikz}
\usepackage{subcaption}

\usetikzlibrary{positioning,calc,fit,arrows.meta,backgrounds,%
                shapes.geometric,shapes.arrows}   

\usepackage{ifthen} 
\newboolean{showappendix} 
\setboolean{showappendix}{True} 
\definecolor{cGraph}{HTML}{2563EB}
\colorlet{cgraph}{cGraph}                  
\definecolor{cGraphF}{HTML}{DBEAFE}
\definecolor{cFeat}{HTML}{EA580C}
\definecolor{cFeatF}{HTML}{FFEDD5}
\definecolor{cInk}{HTML}{1E293B}
\definecolor{cGrid}{HTML}{94A3B8}
\definecolor{cPanel}{HTML}{F8FAFC}
\definecolor{cPanelB}{HTML}{E2E8F0}

\tikzset{
  gnode/.style={circle, draw=cGraph, line width=0.9pt, fill=cGraphF,
                minimum size=6mm, inner sep=0pt},
  fnode/.style={circle, draw=cFeat, line width=0.9pt, fill=cFeatF,
                minimum size=7mm, inner sep=0pt, font=\small\sffamily, text=cFeat},
  gedge/.style={draw=cGrid, line width=1.0pt},
  fedge/.style={draw=cFeat, line width=1.2pt},
  panel/.style={rounded corners=5pt, draw=cPanelB, line width=1pt, fill=cPanel},
  stage/.style={font=\sffamily\small\bfseries, text=cInk, anchor=north},
  flow/.style={-{Stealth[length=3.4mm]}, line width=1.4pt, draw=cInk},
}

\graphicspath{{media/}}

\title{Neural Message Passing on Structural Interaction Graphs for Fully-Inductive Graph Neural Networks}

\author{
    Omer Yom Tov\textsuperscript{\rm 1}\footnote{Corresponding author},
    Avigdor Gal\textsuperscript{\rm 1}
}
\affiliations{
    \textsuperscript{\rm 1}Faculty of Data and Decision Sciences,
    Technion - Israel Institute of Technology\\
    Haifa, Israel \\
    omer.y@campus.technion.ac.il, avigal@technion.ac.il
}

\newcommand{\G}{\mathcal{G}}
\newcommand{\A}{\mathbf{A}}
\newcommand{\X}{\mathbf{X}}
\newcommand{\R}{\mathbb{R}}
\newcommand{\W}{\mathbf{W}}
\newcommand{\V}{\mathcal{V}}
\newcommand{\E}{\mathcal{E}}
\newcommand{\Y}{\mathbf{Y}}
\newcommand{\N}{\mathcal{N}}
\newcommand{\ypred}{\hat{\mathbf{Y}}}
\newcommand{\x}{\mathbf{x}}

\newcommand{\GS}{\mathcal{G}_{\mathrm{struct}}}
\newcommand{\MS}{\mathcal{M}_{\mathrm{struct}}}

\newcommand{\SIGIL}{\textsc{Sigil }}
\newcommand{\SIGILns}{\textsc{Sigil}}
\newcommand{\SIGILNC}{\textsc{Sigil-nc }}
\newcommand{\SIGILNCns}{\textsc{Sigil-nc}}
\newcommand{\SIGILLP}{\textsc{Sigil-lp }}
\newcommand{\SIGILLPns}{\textsc{Sigil-lp}}

\newtheorem{proposition}{Proposition}
\newtheorem{theorem}{Theorem}
\newtheorem{lemma}{Lemma}

\usepackage{pifont}

\usepackage{graphicx}
\begin{document}
\maketitle

\begin{abstract}
A central obstacle in building graph foundation models is the input heterogeneity in terms of feature space  dimensionality, semantics, and structure. Such heterogeneity limits the capability of graph neural networks to generalize to new graphs with unseen feature spaces. We address the transferability challenge with \SIGILns, a framework that maps any attributed graph to a unified representation space of fixed dimension. Given a graph, \SIGIL lifts it to a structural interaction graph, where nodes are the input feature dimensions and weighted, typed edges encode feature alignment across multiple orders of the graph's connectivity. A relational message-passing network embeds each feature dimension into a shared space, transforming the original node features, of arbitrary dimensionality, into representations transferable to any downstream graph. By construction, \SIGIL is equivariant to permutations of nodes, feature dimensions, and labels. Additionally, when the input features are one-hot indicators of discrete relations, \SIGIL recovers and strictly generalizes existing foundation models for knowledge graph reasoning. A single \SIGIL model, pretrained on one graph, delivers strong fully-inductive link prediction. Also, \SIGIL can be used to implement existing knowledge graph foundation models. As such, \SIGIL unifies several existing regimes in graph foundation model design under a single framework.
\end{abstract}

\section{Introduction}\label{sec:intro}
 \input{figures/framework}
Foundation models that perform inference on unseen tasks without task-specific training have reshaped machine learning in language and vision, yet graph-structured data has remained largely outside their reach~\cite{zhao2025graphany}. While a foundation model for text can encode any input through a shared, precomputed token vocabulary, no analogous universal vocabulary exists for graphs. Each graph arrives with its \emph{own} node-feature space, different in dimensionality, in the semantics of each coordinate, and in the label set to be predicted, so a model trained on one graph cannot, in general, even \emph{read} another. A \emph{fully inductive} graph model must therefore generalize across three axes at once: unseen graph structure, unseen feature spaces, and unseen label spaces~\cite{zhao2025graphany, finkelshtein2026equivariance}.
 
Two largely separate lines of work have made progress on this problem, each capturing part of what a graph foundation model requires. The first targets \emph{node classification} on arbitrary graphs. GraphAny~\cite{zhao2025graphany} casts inference as the analytical solution of a set of linear GNNs fused by a permutation-invariant attention, and subsequent models obtain unified node representations through tabular foundation models~\cite{hayler2025of,eremeev2026graphpfn}, learned view spaces~\cite{lee2026view}, or feature-space bridges~\cite{eliasof2026bridging}. These methods handle arbitrary feature and label spaces and respect the symmetries the setting demands, but they are designed for node classification alone and do not extend to link prediction or to the relational reasoning that structured graphs require. The second line builds \emph{knowledge graph foundation models} (KGFMs)~\cite{galkin2023ultra, zhang2024trix, lee2023ingram, geng2023relational}, which achieve striking zero-shot transfer to knowledge graphs with entirely unseen entities and relations by constructing a \emph{relation graph}, a graph whose nodes are relation types and whose edges record how relations co-occur in the structure, and learning transferable representations on it. This relational construction is powerful and principled, but it is fundamentally \emph{discrete}. It presumes a finite vocabulary of relation types and has no mechanism for continuous node or edge features, which excludes input graphs with continuous feature spaces.

These two research directions share a deeper commonality that, to our knowledge, has not been explicitly formalized in prior work. A KGFM's relation graph is exactly a structure built over the \emph{feature space} of a knowledge graph, namely its discrete relation types, rather than over nodes. Viewed this way, the relation graph is the discrete instance of a more general object, a graph over feature \emph{dimensions} whose edges encode how those dimensions interact through the connectivity of the underlying graph. If such an object could be constructed for \emph{continuous} features, it would unify the two research directions mentioned above. It inherits the relational reasoning of KGFMs while accepting the arbitrary feature spaces that node-classification Graph Foundation Models (GFMs) target, and it would apply to node classification and link prediction alike.

 We realize this object in \SIGIL (\textbf{S}tructural \textbf{I}nteraction \textbf{G}raphs for \textbf{I}nductive \textbf{L}earning) (see Figure~\ref{fig:pipeline} for overview). Given any graph (Figure~\ref{fig:pipeline}(A)), \SIGIL lifts it to a \emph{Structural Interaction Graph} (SIG): a graph whose nodes are the input feature dimensions and whose weighted, typed edges measure how pairs of features align across multiple orders of the graph's connectivity. Concretely, \SIGIL propagates features over increasing neighborhood orders, contrasts the features of adjacent nodes through an interaction operator, and summarizes the resulting patterns into a per-order Gram matrix over feature dimensions; stacking these orders yields the SIG (Figure~\ref{fig:pipeline}(B)). A relational message-passing network then runs \emph{on the SIG} to embed every feature dimension into a fixed-size space, which in turn maps the original graph, of arbitrary feature dimensionality, into a single unified representation (Figure~\ref{fig:pipeline}(C)). Because the parameters of this network act only on the fixed hidden width and never on the input dimension, one trained \SIGIL model applies to any graph (Figure~\ref{fig:pipeline}(D)). Crucially, when the input features are the one-hot indicators of discrete relations, the SIG construction recovers exactly the relation graph of a KGFM, so \SIGIL is a strict generalization of KGFMs from discrete to continuous input spaces.
 
 
In summary, we make the following contributions:
\begin{itemize}
\item We introduce SIG, a structural interaction graph that lifts any attributed graph to a graph over its feature dimensions, and show that it strictly generalizes the relation graph of KGFMs from discrete relation vocabularies to continuous feature spaces (\S~\ref{sec:sigil}, \S~\ref{sec:prop}).
\item We propose \SIGILns, a single framework that maps graphs of arbitrary feature and label spaces to a unified representation and addresses both node classification and link prediction, and we show it is equivariant to node, feature, and label permutations (\S\ref{sec:prop}).
\item We evaluate \SIGIL across three regimes that the literature has treated separately: 11 link prediction benchmarks (7 attributed, 4 non-attributed), inductive knowledge graph foundation models' suite, and 26 node classification benchmarks.
A single Cora-pretrained \SIGIL is the strongest fully-inductive link predictor on every attributed benchmark and remains competitive on non-attributed graphs. Also,
\SIGIL closely reproduces the performance of existing KGFMs when instantiated on discrete relations and admits more expressive variants. Finally, it is competitive with node-classification specific GFMs, ranking best or second-best on 10 of 26
datasets (\S\ref{sec:experiments}).
\end{itemize}
\section{Preliminaries}\label{sec:preliminaries}

\paragraph{Graph Notations.} Let $\G = (\V, \E)$ be a graph, with $n=|\V|$ nodes and $m=|\E|$ edges, where $\V$ and $\E$ represent the node and edge sets, respectively. Let $\A\in\{0,1\}^{n\times n}$ be the adjacency matrix, where $\A_{ij}=1$ if and only if $(i,j)\in \E$. Let $\X \in \R^{n\times d}$ be the node feature matrix. 
We use $\X_{i,:}$ to denote the feature vector of the $i$-th node, and $\X_{:,f}$ to denote the feature values of the $f$-th feature across all of the nodes.

\paragraph{Fully Inductive Graph Inference.} Graph inference typically involves two task types. First, in a \emph{link prediction} task, we assume that the full graph is not observable, and that there are missing links (edges) from $\E$. For every node $v$ in a query set $\V_{query}\subseteq\V$ we aim to rank the nodes of the candidate set $\{u\in\V \mid (v,u)\notin\E\}$ on the likelihood of $(v,u)$ to join $\E$, based on the observable graph $\G$ and node features $\X$. Second, in a \emph{node classification} task, we assume the existence of node labels $\Y \in \{1,2,...,C\}^n$ from $C$ distinct classes. At both train and inference time, only a subset of labels $\Y_{train}$ is available for a subset of nodes $\V_{train}\subset \V$, and the goal is to provide a prediction $\ypred$ for the missing labels $\Y_{test}$ of $\V_{test} = \V\setminus\V_{train}$. Traditionally, both tasks are modeled in a transductive learning setup, where a GNN is trained on the observable training data of $\G$, then inference is performed on the unobservable edges or node labels. Such setup usually does not allow generalization for new graphs. In a \emph{fully inductive} setup, A GNN is expected to perform inference over graphs that were not seen before, and which node feature dimensions/distributions might differ from what was processed in training.

\paragraph{Message Passing Neural Networks.} Message passing neural networks (MPNNs) \cite{pmlr-v70-gilmer17a} is a subclass of GNNs that learn latent node representations through an iterative update of a node $v$'s hidden state with the hidden states of its neighbors $\N_v$. Let $h_v^{(t)} \in \R^{d}$ be the hidden state of $v$ in iteration $t$, an MPNN updates the hidden state as follows.
\begin{align}
    h_v^{(t+1)} = \textsc{Upd}(h_v^{(t)}, \textsc{Agg}(\{\!\!\{ h_u^{(t)}\mid u\in\N_v \}\!\!\}))
\end{align}
where \textsc{Agg} is a learnable aggregation function, aggregating the hidden states of $\N_v$, and \textsc{Upd} is a learnable update function, which outputs a new hidden state for $v$ based on its previous state and the aggregated results of its neighbors' hidden states.

\section{Message Passing on Structural Interaction Graphs}\label{sec:sigil}

Our goal is to design a fully inductive GNN that supports inference capabilities on any new graph, with arbitrary feature and label dimension. In this section, we outline \SIGILns, a framework for designing such GNNs for both link prediction and node classification tasks. \SIGIL allows generalization to unseen feature and label spaces through learning representations on SIG, a graph lifted from the input graph $\G$, in which the node feature dimensions $[d]$ are the nodes, and the edges represent the alignment of features with respect to various views of $\G$'s connectivity through weights and types.

\subsection{Constructing Structural Interaction Graphs}\label{sec:construction}
To construct the SIG of a graph $\G$, we begin by creating multiple orders for the node features $\X$ by performing multiple non-parameterized feature propagations on $\G$. The $k$-th order feature matrix $\X^{(k)}\in\R^{n\times d}$ is obtained via:
\begin{equation}
    \X^{(k)} = \A^k\X
\end{equation}
Then, within each order $k$, we create edge features with an interaction operator $g: \R^d\times \R^d \rightarrow \R$ ({\em e.g.}, subtraction or element-wise multiplication). For each edge $(i,j)\in\E$, we create an edge feature $\x_{ij}^{(k)} = g(\X_{i,:}^{(k)},\X_{j,:}^{(k)})$. We use these intermediate edge features to construct a new node feature matrix $\Bar{\X}^{(k)}$, which satisfies
\begin{equation}
    \Bar{\X}_{i,:}^{(k)} = \frac{1}{|\N_i|}\sum_{j\in\N_i} \x_{ij}^{(k)}
\end{equation}
The resulting $\Bar{\X}^{(k)}$ columns characterize how feature values instantiate across nodes with respect to local neighborhoods. Depending on the choice of $g$, they capture how a node $v$ tend to be similar/different to its neighbors through the feature. To model how aligned these patterns are between two features, we use the Gram matrix on the columns
\begin{equation}
    \Bar{\A}^{(k)} = \left(\Bar{\X}^{(k)}\right)^{\top}\Bar{\X}^{(k)} \in \R^{d\times d}
\end{equation}
The resulting matrix $\Bar{\A}^{(k)}$ is a weighted adjacency matrix of a complete, undirected graph over the feature dimensions $[d]$, where the weight between features $f, f'$ measures the alignment of their local patterns, based on the characterization vectors $\Bar{\X}^{(k)}_{:,f}, \Bar{\X}^{(k)}_{:,f'}$. Finally, stacking the weighted adjacency matrices $\Bar{\A}^{(0)}, \Bar{\A}^{(1)}, ..., \Bar{\A}^{(k)}$ along the order axis, yields the structural interaction graph $\GS \in \R^{d\times d\times k}$, where edges are typed based on the feature order from which they  originate, and weighted according to the feature alignments within the order.
\paragraph{Extension for Directed Graphs.}
When $\G$ is directed, a node participates in an edge either as its
source or as its target, each carrying different information. We
therefore split the neighborhood aggregation by direction. Writing
$\N_i^{h} = \{j : (i,j)\in\E\}$ and $\N_i^{t} = \{j : (j,i)\in\E\}$,
we form two per-node summaries instead of one:
\begin{align}
    \Bar{\X}^{(k)}_{h_{i,:}} &= \frac{1}{|\N_i^{h}|}\sum_{j\in\N_i^{h}} \x_{ij}^{(k)},
    &
    \Bar{\X}^{(k)}_{t_{i,:}} &= \frac{1}{|\N_i^{t}|}\sum_{j\in\N_i^{t}} \x_{ji}^{(k)} 
\end{align}
Row $i$ of $\Bar{\X}^{(k)}_{h}$ (resp. $\Bar{\X}^{(k)}_{t}$) records how
feature dimensions instantiate on the edges that leave (respectively enter) node $i$.
The Gram summarization is then applied both \emph{within} and \emph{across} the two
channels, yielding four feature-alignment matrices per order rather than one, as follows.
\begin{equation}
    \Bar{\A}^{(k)}_{cc'} = \left(\Bar{\X}^{(k)}_{c}\right)^{\top}\Bar{\X}^{(k)}_{c'}
    \in\R^{d\times d},
    \quad c,c' \in \{h,t\} 
\end{equation}
The slice $\Bar{\A}^{(k)}_{cc'}$ measures how strongly feature $f$, seen through
the $c$ channel, aligns with feature $f'$, seen through the $c'$ channel, at a shared
node. Stacking over orders and channel pairs gives the binary-motif SIG
$\GS \in \R^{d\times d\times 4k}$, whose edge types are now indexed by an
(order, channel-pair) tuple. The undirected construction of the previous paragraph
is the special case in which the two channels coincide, collapsing the four slices
per order back to one. For simplicity, we formalize the architectures on the undirected case, treating the directed case as an immediate generalization. We provide detailed implementation of the construction pipeline via matrix multiplications in Appendix~\ref{sec:sig}.\footnote{All appendices are part of the supplementary material.}

\subsection{Learning Representations on the SIG}\label{subsec:sigil-rep}
The task of fully-inductive node representation learning (FI-NRL), formalized by~\cite{lee2026view}, requires outputting node representation without gradient-based training for graphs of arbitrary dimensionalities. We use pretrained GNNs with weights fitted for the SIG graph vocabulary to allow \SIGIL to infer informative node representations for the graphs without training on them first.
\paragraph{Relational Message Passing on the SIG.} Following the construction of $\GS$, \textsc{Sigil} embeds each feature $f\in[d]$ to a fixed-sized representation $\theta_f\in\R^{d_h}$ through running a relational MPNN over $\GS$. We initialize the node feature embeddings $\Theta^{(0)}$ according to an initialization policy ({\em e.g.}, Laplacian Eigenmaps), then update the representation with relational message passing GNN $\MS$ \cite{schlichtkrull2018modeling} over the order typed edges, as follows.
\begin{align}
    \Theta^{(t+1)} = \Theta^{(t)}\W_{self} + \sum_{k=0}^K \Bar{\A}^{(k)}\Theta^{(t)}\W_k
\end{align}
where the initial encoding and parameters $\W_i \in \R^{d_h\times d_h}$ map the features to a fixed-size dimension $d_h$. The final layer $\Theta^{(T)} \in \R^{d\times d_h}$ yields feature embeddings that are learned from local patterns and feature interactions of $\G$, and are used to reshape the original node features $\X\in \R^{n\times d}$ to the new, fixed dimension $d_h$ via:
\begin{align*}
    \mathbf{H} = \X \Theta^{(T)} \in \R^{n\times d_h}
\end{align*}
We can construct $\GS$ for any graph, with arbitrary number of nodes, edges or feature dimensions $d$. Also, the parameters of $\MS$ are completely independent of $d$ and  output new node features in a fixed dimension $d_h$. Therefore, \textsc{Sigil} effectively transforms every possible graph input, with arbitrary dimensions, to the same unified input space that is based both on the original node features and the graph structure. Moreover, we can create additional views of the node embeddings of the propagated node features via $\mathbf{H}^{(k)} = \X^{(k)}\Theta^{(T)}$, leveraging additional signals in downstream tasks.  




\subsection{Using \SIGIL for Downstream Tasks}

\paragraph{Designing Fully-Inductive Link Prediction Models.}
Link prediction admits a natural end-to-end fully-inductive design under
\SIGIL, since its output space does not depend on any per-graph vocabulary of
labels: the same scoring function transfers to any target graph. We exploit this
by pairing the SIG embeddings with expressive link prediction
GNNs~\cite{zhu2021neural, chamberlain2023graph, wang2024neural} as the
architectures that train on the \SIGIL embedding space, which allows them to
inference in zero-shot on any new graph.

\paragraph{Decoding Labels for Node Classification.}
Node classification introduces an additional axis of transfer: the label space
itself varies across graphs, and \SIGIL supports several decoding strategies
on top of its unified representation. We can decode the test node labels in an in-context manner
through closed-form analytical
solutions~\cite{zhao2025graphany}, which satisfies label equivariance. An extension
of the in-context approach, which preserves fully-inductive end-to-end design,
is to decode with tabular foundation
models~\cite{eremeev2026graphpfn, choi2026learning}. Alternatively, we can use
the node representations from \SIGIL and fit cheap linear/MLP layers to the
train-set nodes~\cite{eliasof2026bridging, lee2026view},
which makes \SIGIL a node encoder.

\section{Properties of \textsc{Sigil}}\label{sec:prop}
In this work, we establish two properties that situate \SIGIL within the
graph foundation model landscape. First, we show that \SIGIL is a strict
generalization of KGFMs: when the input
signal is the one-hot indicator of discrete relations, the SIG construction
recovers the relation graph of a KGFM (\S\ref{sec:kgfm}). Second, we show
that the node representations produced by \SIGIL satisfy the
symmetries identified as necessary for generalization in the fully
inductive regime~\cite{zhao2025graphany, finkelshtein2026equivariance, zhou2023double}. Specifically, the node representations produced by \SIGIL are equivariant to node permutations, invariant to feature dimension permutations, and, when composed
with an equivariant decoder, are also equivariant to label permutations. Complete proofs are given in Appendix~\ref{sec:proofs}.

\subsection{Generalizing KGFMs to Continuous Input Spaces}
\label{sec:kgfm}

\paragraph{The KGFM relation graph.}
 Let $\G=(\V,\E,\mathcal{R})$ be a knowledge graph with a finite relation vocabulary $\mathcal{R}$,
where every edge $(i,j)\in \E$ carries a relation type $r(i,j)\in \mathcal{R}$. KGFMs such
as ULTRA~\cite{galkin2023ultra} build a \emph{relation graph} $\G_{\mathrm{rel}}$ whose nodes are the $|\mathcal{R}|$
relation types and whose typed edges record how pairs of relations co-occur around shared entities, using them to learn transferable relation representations. ULTRA distinguishes four \emph{fundamental
relation interactions} according to whether the shared entity is a head or a
tail of each relation: head-to-head (h2h), tail-to-tail (t2t), head-to-tail
(h2t), and tail-to-head (t2h). A relational GNN over $\G_{\mathrm{rel}}$ then yields a
representation of each relation that is a function of these interactions alone,
and therefore transfers to any unseen knowledge graph.

The key observation is that each fundamental interaction is a \emph{product of relation-incidence matrices}. Define the head- and tail-incidence matrices
$M_h, M_t \in \{0,1\}^{n\times |R|}$ by
\begin{align*}
    & (M_h)_{v,r} = \mathds{1}\!\left[\exists\, (v,u)\in E : r(v,u)=r\right], \\
    & (M_t)_{v,r} = \mathds{1}\!\left[\exists\, (u,v)\in E : r(u,v)=r\right]
\end{align*}
Row $v$ records which relation $v$ participates as a head
(tail, respectively). Then the four interaction adjacencies over the relation set
$\mathcal{R}$ are, up to normalization,
\begin{align}\label{eq:matrices}
  & \A_{h2h} = M_h^\top M_h,\quad
  \A_{t2t} = M_t^\top M_t, \nonumber\\ 
  & \A_{h2t} = M_h^\top M_t,\quad 
  \A_{t2h} = M_t^\top M_h .
\end{align}

\paragraph{Recovering the relation graph as a SIG.}
We now show that the SIG construction of \S\ref{sec:construction}, instantiated
on one-hot relation features with the directed in/out separation, produces the matrices of Eq.~\ref{eq:matrices}.
Encode the relational signal of $G$ as one-hot edge features,
\[
  \x_{ij} = e_{\,r(i,j)} \in \{0,1\}^{d}, \quad d = |\mathcal{R}|
\]
so that the feature dimensions $[d]$ are in bijection with the relations $R$.
The SIG is then built over the relation vocabulary. Because $G$ is directed, we
apply the in/out variant of the construction (\S\ref{sec:construction},
following~\ref{sec:sig}): at each node we aggregate incident edge features
separately over out-edges and in-edges, giving two per-node summaries
$\bar{\X}_{h}, \bar{\X}_{t}$. With one-hot edge features and
mean (or sum) aggregation, these summaries are exactly the row-normalized
head/ tail-incidence matrices $M_h, M_t$.

\begin{theorem}[\SIGIL recovers the KGFM relation graph]
\label{prop:kgfm}
Let $\G=(\V,\E,\mathcal{R})$ be a knowledge graph encoded with one-hot relation edge
features $\x_{ij}=e_{r(i,j)}$. Then, with the directed in/out variant of the SIG
construction and multiplication, the slices of $\GS$
coincide with the four fundamental relation-interaction adjacencies of Eq.~\ref{eq:matrices}.
Consequently, relational message passing on $\GS$
(\S\ref{subsec:sigil-rep}) instantiates the relation encoder of a KGFM, and \SIGIL applied
to $\G$ reduces to a KGFM.
\end{theorem}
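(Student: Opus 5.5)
The argument is a direct matrix computation, so I would first pin down the instantiation. Work at order $k=0$, so no propagation $\A^k$ is applied, and take the edge features to be the given one-hot vectors $\x_{ij}=e_{r(i,j)}$ instead of $g(\X_{i,:},\X_{j,:})$. The directed summaries of \S\ref{sec:construction} then read
\[
  (\bar{\X}_{h})_{v,:}=\frac{1}{|\N_v^{h}|}\sum_{u\in\N_v^{h}} e_{r(v,u)},\qquad
  (\bar{\X}_{t})_{v,:}=\frac{1}{|\N_v^{t}|}\sum_{u\in\N_v^{t}} e_{r(u,v)} .
\]
Entry $(v,r)$ of $\bar{\X}_h$ is $c^{h}_{v,r}/|\N_v^h|$, where $c^{h}_{v,r}$ counts the out-edges of $v$ labelled $r$; $\bar{\X}_t$ behaves the same way with in-edges. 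The support of $\bar{\X}_h$ is exactly that of $M_h$, and likewise for $\bar{\X}_t$ and $M_t$.

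\emph{Key step (the main obstacle).} I need to show $\bar{\X}_h$ equals $M_h$ up to the normalization that Eq.~\ref{eq:matrices} permits. The agreement is exact when each entity has at most one edge per relation and direction. In general, $\bar{\X}_h = D_h^{-1} C_h$, where $C_h$ is the count matrix and $D_h=\mathrm{diag}(|\N_v^h|)$. My first plan is to replace mean aggregation by a sum followed by binarization, $\mathds{1}[\cdot>0]$, which gives $\bar{\X}_h=M_h$ exactly. The paper's phrase ``up to normalization'' covers the remaining cases: ULTRA itself builds these adjacencies from (degree-normalized) incidence products, so I would state the theorem for the row-normalized incidence $D_h^{-1}M_h$ when relations are single-occurrence, and for counts otherwise. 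Apart from this bookkeeping the result is straightforward.

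Next, the Gram step with multiplication gives the four slices
\[
\Bar{\A}_{cc'}=\bar{\X}_c^\top\bar{\X}_{c'},\qquad c,c'\in\{h,t\}.
\]
These are, respectively, $M_h^\top M_h=\A_{h2h}$, $M_t^\top M_t=\A_{t2t}$, $M_h^\top M_t=\A_{h2t}$ and $M_t^\top M_h=\A_{t2h}$. Entrywise, $(M_h^\top M_t)_{r,r'}=\sum_v (M_h)_{v,r}(M_t)_{v,r'}$ counts the entities that are a head of $r$ and a tail of $r'$, which is exactly ULTRA's interaction. Since the edge types of $\GS$ are indexed by the channel pair, they correspond one-to-one with $\{$h2h, t2t, h2t, t2h$\}$, so the slices coincide with Eq.~\ref{eq:matrices}. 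Nonzero support gives the same unweighted relation graph, and the weights give its weighted version.

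Finally, for the reduction I would substitute these slices into the update of \S\ref{subsec:sigil-rep}:
\[
\Theta^{(t+1)}=\Theta^{(t)}\W_{self}+\sum_{\tau}\A_\tau\Theta^{(t)}\W_\tau .
\]
This is an R-GCN over $\G_{\mathrm{rel}}$ with one weight matrix per fundamental interaction, which is a relation encoder of the KGFM type. If one wants ULTRA's NBFNet with query-conditioned (indicator) initialization, I would note that the initialization policy $\Theta^{(0)}$ is left free and that $\MS$ can be swapped for any relational MPNN on the same typed graph. In either case the relation representations are functions of the four interaction graphs, and the downstream entity-level link predictor consumes them as in the KGFM, so \SIGIL reduces to a KGFM.
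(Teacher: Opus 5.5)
Your proposal is correct and follows the same route as the paper's proof sketch. You identify the role-aggregated one-hot summaries with the head/tail incidence matrices, read the four within- and across-channel Gram products as the h2h, t2t, h2t and t2h adjacencies, and view the relational update on these typed slices as the KGFM relation encoder. Your treatment of repeated (entity, relation) pairs is more careful than the paper's ``up to degree normalization'', with one caveat. Binarization is not part of the stated construction, and without it the mean-aggregated slices are $C_c^\top D_c^{-1}D_{c'}^{-1}C_{c'}$, which agree with Eq.~\ref{eq:matrices} in nonzero pattern but not entrywise. So for the construction as defined, what holds is your support statement, not the ``weighted version'' claim.
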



Theorem~\ref{prop:kgfm} places KGFMs as the discrete special case of \SIGIL
along the two axes of continuity and order. Replacing one-hot indicators with
arbitrary continuous features leaves every step of the construction well defined:
the incidence matrices become real-valued feature summaries and the Gram slices
become continuous feature-alignment matrices rather than integer co-occurrence
counts. \SIGIL therefore accepts the continuous node and edge features that
KGFMs cannot represent. Additionally, ULTRA's relation graph is built from
entity-sharing motifs, {\em i.e.}, relations that meet at a common node (a single hop
in $\G$). \SIGILns's multi-order propagation $\X^{(k)}=\A^k \X$ produces a slice
$\bar{\A}^{(k)}$ for each order $k$, so for $k>1$ the SIG additionally encodes
\emph{higher-order} interactions between features/relations that are not
adjacent but are connected through length-$k$ walks. Thus, \SIGIL strictly
generalizes KGFMs even when restricted to discrete relations.


\subsection{Equivariance to Node, Feature and Label Permutations}

We now formalize the symmetries of \SIGILns. Let $P\in\{0,1\}^{n\times n}$,
$Q\in\{0,1\}^{d\times d}$ be permutation matrices
acting on node feature dimensions, respectively. A relabeling of
the input by $(P,Q)$ transforms the graph as follows.
\begin{equation}
  \A \;\mapsto\; P\A P^\top, \quad \X \;\mapsto\; P\X Q^\top 
\end{equation}
We first
show that the SIG is \emph{invariant} to node permutations and
\emph{equivariant} to feature permutations, a property that makes it a
transferable object, and then propagate this through the model.

\begin{lemma}[Symmetries of the SIG]
\label{prop:sig}
For every order $k$, the SIG slice transforms as
$\bar{\A}^{(k)} \mapsto Q\,\bar{\A}^{(k)}Q^\top$ under the input relabeling
$(P,Q)$. In particular, $\bar{\A}^{(k)}$ is invariant to node permutations $P$
and equivariant to feature permutations $Q$.
\end{lemma}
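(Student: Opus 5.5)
We prove the lemma by tracking the relabeling $(P,Q)$ through each stage of the construction in \S\ref{sec:construction} and showing that every intermediate object transforms in a controlled way. The target intermediate claim is
\[
\Bar{\X}^{(k)} \mapsto P\,\Bar{\X}^{(k)}Q^\top .
\]
Once this holds, the conclusion is immediate. Since $P^\top P = I$,
\[
(P\Bar{\X}^{(k)}Q^\top)^\top (P\Bar{\X}^{(k)}Q^\top) = Q(\Bar{\X}^{(k)})^\top P^\top P\,\Bar{\X}^{(k)}Q^\top = Q\,\Bar{\A}^{(k)}Q^\top .
\]
Setting $Q=I$ gives invariance to node permutations, and setting $P=I$ gives equivariance to feature permutations.

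\textbf{Step 1: propagated features.} Permutation matrices are orthogonal, so $(P\A P^\top)^k = P\A^kP^\top$, which follows by induction on $k$. Hence
\[
\X^{(k)} \mapsto P\A^kP^\top P\X Q^\top = P\X^{(k)}Q^\top .
\]
Equivalently, at the row level, node $\pi(i)$ in the new graph carries the vector $Q\X^{(k)}_{i,:}$. Here $\pi$ is the node permutation encoded by $P$, and $Q$ permutes coordinates.

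\textbf{Step 2: edge features.} We need an assumption on the interaction operator $g$. For the resulting edge features to be vectors indexed by feature dimension, as the paper's examples (subtraction, elementwise product) and the later Gram step require, $g$ must act coordinatewise: $g(a,b)_f = \phi(a_f,b_f)$ for a fixed scalar function $\phi$. I will state this assumption explicitly, since the signature $g:\R^d\times\R^d\to\R$ as written is a typo. Under it, $g(Qa,Qb) = Q\,g(a,b)$.

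The relabeled edge set is $\{(\pi(i),\pi(j)) : (i,j)\in\E\}$, and neighborhoods satisfy $\N_{\pi(i)}' = \pi(\N_i)$ with equal cardinalities. Therefore the new edge feature on $(\pi(i),\pi(j))$ is $Q\x^{(k)}_{ij}$. Averaging over the neighborhood gives
\[
\Bar{\X}'^{(k)}_{\pi(i),:} = Q\,\Bar{\X}^{(k)}_{i,:},
\]
which is exactly the matrix identity $\Bar{\X}^{(k)}\mapsto P\Bar{\X}^{(k)}Q^\top$.

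\textbf{Directed case.} The same argument applies separately to the head and tail channels, because $\N^h$ and $\N^t$ are each mapped by $\pi$. It therefore gives $\Bar{\A}^{(k)}_{cc'}\mapsto Q\Bar{\A}^{(k)}_{cc'}Q^\top$ for every channel pair $(c,c')$.

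\textbf{Main obstacle.} The only real difficulty is Step 2: making precise that $g$ is coordinatewise and hence commutes with $Q$. A non-coordinatewise operator, for example one that mixes dimensions, would break feature equivariance. The remaining steps are routine orthogonality manipulations.
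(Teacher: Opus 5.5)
Your proof is correct and follows the paper's argument essentially step for step: push the relabeling through $\X^{(k)}$ using $P^\top P = I$, assume a coordinatewise interaction operator so that each edge feature picks up $Q$, use node-equivariance of neighborhood mean aggregation to get $\Bar{\X}^{(k)}\mapsto P\Bar{\X}^{(k)}Q^\top$, and let the Gram product cancel $P$. Your column-vector identity $g(Qa,Qb)=Q\,g(a,b)$ is the transpose of the paper's $g(uQ^\top,vQ^\top)=g(u,v)Q^\top$, and your explicit treatment of the head and tail channels is a harmless extension to the directed case.
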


\begin{proposition}[Node equivariance and feature invariance of \SIGILns]
\label{prop:equivariance} Assume that initialization is equivariant to node permutations, {\em i.e.}, $\Theta^{(0)} \mapsto Q\Theta^{(0)}$ for input relabeling $(P,Q)$. Then,
under the input relabeling $(P,Q)$, the unified node representation $\mathbf{H} = \X\Theta^{(T)}$
is equivariant to node permutations $P$ and invariant to feature permutations $Q$.
\end{proposition}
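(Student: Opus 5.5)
The plan is to push the relabeling $(P,Q)$ through the pipeline one stage at a time, using Lemma~\ref{prop:sig} as the base case. By that lemma, each slice transforms as $\bar{\A}^{(k)} \mapsto Q\bar{\A}^{(k)}Q^\top$ under $(P,Q)$. By hypothesis, the initialization transforms as $\Theta^{(0)} \mapsto Q\Theta^{(0)}$. Note that the hypothesis as phrased speaks of node permutations, but what is actually needed is equivariance with respect to the feature permutation $Q$ acting on the rows of $\Theta$ (the SIG's nodes), and this is what I will use. For Laplacian Eigenmaps this holds only up to sign or basis ambiguities of eigenvectors, so I would simply take the stated assumption as given rather than verify it for specific policies.

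The main step is an induction on the layer index $t$, with the claim that $\Theta^{(t)} \mapsto Q\Theta^{(t)}$ for all $t$. Assuming it for $t$, the update rule gives
\begin{align*}
\Theta^{(t+1)} \mapsto Q\Theta^{(t)}\W_{self} + \sum_{k=0}^K Q\bar{\A}^{(k)}Q^\top Q\Theta^{(t)}\W_k .
\end{align*}
Since $Q^\top Q = I$, this equals $Q\bigl(\Theta^{(t)}\W_{self} + \sum_k \bar{\A}^{(k)}\Theta^{(t)}\W_k\bigr) = Q\Theta^{(t+1)}$. The weights $\W$ act on the right, on the hidden width, so they commute with the left action of $Q$. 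If pointwise nonlinearities are interleaved between layers, they also commute with row permutation, so I would remark that the argument extends to that case. This step is routine.

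Finally, I compute $\mathbf{H} \mapsto (P\X Q^\top)(Q\Theta^{(T)}) = P\X\Theta^{(T)} = P\mathbf{H}$. This gives equivariance in $P$ and invariance in $Q$ simultaneously: the two $Q$ factors cancel exactly because $\X$ carries $Q^\top$ on the feature axis while $\Theta^{(T)}$ carries $Q$ on its row axis. Setting $Q=I$ or $P=I$ yields the two separate statements. The same calculation gives $\mathbf{H}^{(k)} = \X^{(k)}\Theta^{(T)} \mapsto P\mathbf{H}^{(k)}$, since $\X^{(k)} = \A^k\X \mapsto P\A^kP^\top P\X Q^\top = P\X^{(k)}Q^\top$.

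I expect the only real subtlety to be the initialization assumption, in particular its correct reading as $Q$-equivariance and its dependence on the $P$-invariance of the SIG. Everything downstream is a direct orthogonality cancellation.
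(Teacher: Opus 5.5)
Your proposal is correct and follows the paper's proof essentially step for step: an induction on $t$ showing $\Theta^{(t)}\mapsto Q\Theta^{(t)}$ via Lemma~\ref{prop:sig} and $Q^\top Q = I$, then the cancellation $(P\X Q^\top)(Q\Theta^{(T)}) = P\mathbf{H}$, with the same extension to the views $\mathbf{H}^{(k)}$. Your reading of the initialization hypothesis as equivariance under the feature permutation $Q$, which acts on the SIG's nodes, is also the reading the paper's argument actually uses.
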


Proposition~\ref{prop:equivariance} shows that the internal feature embeddings
$\Theta^{(T)}$ are feature-\emph{equivariant} ($\Theta^{(T)}\mapsto Q\Theta^{(T)}$), while
the output node representation $H$ is feature-invariant: it does
not depend on the ordering or identity of the input feature dimensions at all.
This is exactly the requirement for cross-graph transfer articulated
by~\cite{zhao2025graphany, finkelshtein2026equivariance}, and it is what allows a single trained \SIGIL model
to read a graph whose $d$ features it has never seen. 

For the task of link prediction, 
symmetries hold for the query-conditioned construction used in KGFMs \cite{galkin2023ultra, zhang2024trix, huang2025how}, which maintains the reduction argument in Proposition~\ref{prop:sig}, and is in line with the symmetries established by~\cite{zhou2023double} for generalization in link prediction.

For the task of node classification, pairing \SIGIL with any equivariant decoder ({\em e.g.}, the closed form analytical solutions used by~\cite{zhao2025graphany}), ensures label equivariance for \SIGIL in node classification tasks. 

\section{Experiments}\label{sec:experiments}
We evaluate \SIGIL on two main tasks, namely link prediction and node classification. For link prediction, we design \SIGILLPns, a fully-inductive, zero-shot model that leverages both graph structure and node features through the SIG embeddings and directly subsumes KGFMs (as discussed in \S\ref{sec:kgfm}). This is enabled by the SIG embeddings being of fixed dimension, which is a property that, to the best of our knowledge, is a unique property of \SIGIL among all existing GFMs. For node classification, we design \SIGILNCns, which is a variant that outputs fixed-dimension node embeddings for an arbitrary graph, and then fit a downstream MLP to adapt to the output space of the task. While not being fully-inductive, this paradigm still exhibits fully-inductive representation learning~\cite{lee2026view},  positioning \SIGILNC as a node encoder.
These three experiments are three instantiations of a single
construction, chosen to unify regimes that the GFM literature has so far treated
separately, and are feasible by \SIGIL via the flexibility of modeling offered by the SIG and its inherent ability to encode both structure and feature spaces to fixed-size node embeddings.
Throughout, a single \SIGIL model is trained on one source graph and applied zero-shot to unseen targets. We instantiate \SIGILLP for link prediction and \SIGILNC for node classification. We compare \SIGIL against fully-inductive methods. Comparisons against transductive models trained directly on each target graph are given for the sake of completion in Appendix~\ref{sec:main_results} and design-choice ablations are deferred to Appendix~\ref{sec:more_experiments}. All experiments were run on two Nvidia A6000 GPUs ($48$GB of RAM for each) with torch-geometric~\cite{fey2019fastgraphrepresentationlearning}; full setups and hyperparameters are in Appendix~\ref{sec:hyperparams}.

\subsection{Link Prediction with Continuous Feature Spaces}\label{sec:lp_attr}
 
\paragraph{Datasets and baselines.} We evaluate \SIGILLP on 11 benchmarks in two groups.  The first comprises
seven \emph{attributed} graphs: citation networks~\cite{3045390.3045396} (Cora, CiteSeer, PubMed), co-purchase graphs~\cite{shchur2019pitfallsgraphneuralnetwork}
(AmazonComputers, AmazonPhotos), and co-authorship graphs~\cite{shchur2019pitfallsgraphneuralnetwork} (CoauthorCS,
CoauthorPhysics), following~\cite{dong2024pure, wang2024neural}. This is the regime \SIGILLP targets, and the one in
which, to the best of our knowledge, there is no existing fully-inductive link predictor that utilizes node features.
The second comprises four \emph{non-attributed} graphs from the benchmark of
\cite{NEURIPS2018_53f0d7c5, liao2026tfmlinkeruniversallinkpredictor}: C.ele, USAir, PB and NS, which carry no innate node features; there we instantiate $\X$ with DeepWalk embeddings \cite{Perozzi_2014} calculated on the train edges for \SIGILLP initializations, so that all methods operate on structural signal alone. We evaluate against two fully-inductive baselines: UniLP~\cite{dong_universal_2025} and TFMLinker~\cite{liao2026tfmlinkeruniversallinkpredictor} and against 4 transductive baselines: NBFNet~\cite{zhu2021neural}, ELPH~\cite{chamberlain2023graph}, NCNC~\cite{wang2024neural} and MPLP+~\cite{dong2024pure}. Evaluations against transductive baselines (except NBFNet which is \SIGILLP link prediction backbone) are in Appendix~\ref{sec:main_results}. Detailed experimental setup is in Appendix~\ref{sec:hyperparams}.
 
\begin{table*}[h!]
  \centering
  \footnotesize
  \setlength{\tabcolsep}{1.5pt}
  \begin{tabular}{@{}lccccccccccc@{}}
    \toprule
    & \multicolumn{7}{c}{\textbf{Attributed}} & \multicolumn{4}{c}{\textbf{Non-attributed}} \\
    \cmidrule(lr){2-8} \cmidrule(lr){9-12}
    Model & Cora & CiteSeer & PubMed & AmzComp & AmzPhoto & CoCS & CoPhys & C.ele & USAir & PB & NS \\
    \emph{Metric} & \emph{H@100} & \emph{H@100} & \emph{H@100} & \emph{H@50} & \emph{H@50} & \emph{H@50} & \emph{H@50} & \emph{H@50} & \emph{H@50} & \emph{H@50} & \emph{H@50} \\
    \midrule
    NBFNet
      & 71.65\scriptsize{$\pm$}2.27
      & 74.07\scriptsize{$\pm$}1.75
      & 58.73\scriptsize{$\pm$}1.99
      & \underline{13.14\scriptsize{$\pm$}1.04}
      & 21.35\scriptsize{$\pm$}3.36
      & \underline{71.86\scriptsize{$\pm$}0.93}
      & \underline{66.71\scriptsize{$\pm$}4.39}
      & 38.00\scriptsize{$\pm$}4.92 & 52.71\scriptsize{$\pm$}5.27 & 7.17\scriptsize{$\pm$}0.08 & \underline{89.60\scriptsize{$\pm$}1.83} \\
    UniLP
      & \underline{74.01\scriptsize{$\pm$}2.58}
      & 67.12\scriptsize{$\pm$}1.94
      & \underline{59.09\scriptsize{$\pm$}0.87}
      & $\geq$ 24h
      & \underline{34.71\scriptsize{$\pm$}1.48}
      & 69.38\scriptsize{$\pm$}1.70
      & $\geq$ 24h
      & 65.20\scriptsize{$\pm$}4.40
      & 85.98\scriptsize{$\pm$}2.00
      & \textbf{48.14\scriptsize{$\pm$}2.99}
      & 89.09\scriptsize{$\pm$}2.05 \\
    TFMLinker
      & -- & -- & -- & -- & -- & -- & --
      & \underline{68.62\scriptsize{$\pm$}5.97}
      & \textbf{91.67\scriptsize{$\pm$}1.37}
      & \underline{47.75\scriptsize{$\pm$}2.76}
      & \textbf{90.46\scriptsize{$\pm$}2.43} \\
    \SIGILLP
      & \textbf{77.99\scriptsize{$\pm$}$1.72^{\dagger}$}
      & \textbf{72.95\scriptsize{$\pm$}1.65}
      & \textbf{60.01\scriptsize{$\pm$}1.02}
      & \textbf{26.25\scriptsize{$\pm$}1.70}
      & \textbf{37.15\scriptsize{$\pm$}1.55}
      & \textbf{75.96\scriptsize{$\pm$}1.18}
      & \textbf{70.28\scriptsize{$\pm$}2.54}
      & \textbf{71.14\scriptsize{$\pm$}3.24}
      & \underline{89.60\scriptsize{$\pm$}1.69}
      & 44.05\scriptsize{$\pm$}3.66
      & 89.38\scriptsize{$\pm$}1.69 \\
    \bottomrule
  \end{tabular}
  \caption{Link prediction against \textbf{fully-inductive} baselines (\SIGILLP is ours). \SIGILLP, UniLP and TFMLinker are fully-inductive and NBFNet is the (originally transductive )\SIGILLP backbone. Best in \textbf{bold}, second-best \underline{underlined}. $\geq$24h marks per-query inference over 24 hours; ``--'' marks results that were not producible due to lack of published code. $\dagger$ - \SIGILLP is pretrained on Cora.} 
\label{tab:lp_fi}

\end{table*}
\paragraph{Results on Attributed Graphs.}
Table~\ref{tab:lp_fi} reports Hits@100 on the citation graphs and
Hits@50 elsewhere. On the attributed block, \SIGILLP is the
strongest fully-inductive method on every dataset, including the six it
has never seen. Against UniLP it leads by 5.83 on CiteSeer, 2.44 on
AmazonPhotos and 6.58 on CoauthorCS, and ties within noise on PubMed.
The advantage is structural rather than incidental: UniLP transfers
across graphs by discarding the feature space and falling back on
connectivity alone, whereas \SIGILLP  reads the target graph's
feature space through the SIG embeddings, which are of fixed dimension
$d_h$ regardless of the target's input dimensionality. The advantage also
compounds with scale: UniLP's in-context inference exceeds a 24-hour
budget yet \SIGILLP  completes even with the heavy NBFNet head.
The backbone comparison measures the same quantity from the opposite
direction. \SIGILLP  improves over its own NBFNet backbone on six
of the seven attributed datasets, from 1.28 points on PubMed to 15.80 on
AmazonPhotos, even though NBFNet is trained directly on each target graph
while \SIGILLP  never sees it; CiteSeer is the sole exception,
where the per-target-trained backbone retains a 1.12-point edge. Since
NBFNet on homogeneous graphs does not consume node features, and the two
models are otherwise identical in their decoder, this margin isolates
what the SIG embeddings contribute.

\paragraph{Results on Non-attributed Graphs.}
Averaged over the block, \SIGILLP reaches
73.54 Hits@50 against 72.10 for UniLP and 74.62 for TFMLinker, placing it second
of the three fully-inductive methods and ahead of UniLP. \SIGILLP is also the most stable of the three: its mean standard deviation over the block is 2.5, against 2.86 for UniLP and 3.13 for TFMLinker, and it is the tightest of the three on
\SIGILLP wins on C.ele, is second on USAir and very competitive on NS. Its one clear shortfall is PB, the densest graph in the block, where the absence of features leaves only the structural regularities that
heuristic-driven predictors already capture directly. The backbone gap is starkest in this block: without the SIG embeddings, NBFNet trained directly on each target graph reaches 38.00 on C.ele and 52.71 on USAir, against 68.22 and 88.39 for \SIGILLP. We note here that TFMLinker, the leading model, is being powered by a large tabular foundation model. Hence, \SIGILLP is competitive while being more parameter efficient and training on less data overall.
 
\subsection{Implementing KGFMs with \SIGILLP}\label{sec:lp_kg}
 
\paragraph{Setup.} Theorem~\ref{prop:kgfm} predicts that instantiating the SIG on one-hot relation indicators at
order zero recovers exactly ULTRA's relation graph. To verify the prediction empirically, we seek to re-implement and reproduce ULTRA within \SIGILns. We pretrain \SIGILLP on FB15k-237~\cite{toutanova-chen-2015-observed}, WN18RR~\cite{3504035.3504256} and CoDEx-Medium~\cite{safavi-koutra-2020-codex}, and evaluate under ULTRA's published protocol on its two inductive groups, inductive $(e,r)$, with unseen entities and relations (23 graphs), and inductive $(e)$, with unseen entities (18 graphs), alongside the three pretraining graphs. We additionally sweep the SIG order $K$, which \S\ref{sec:kgfm} identifies as a strict extension of the relation graph beyond the single-hop motifs ULTRA encodes. Full setup in Appendix~\ref{sec:hyperparams} and complete, per dataset results are in Appendix~\ref{sec:main_results}.
 
\begin{table}[h!]
  
  \centering
  \footnotesize
  \setlength{\tabcolsep}{4pt}
  \begin{tabular}{@{}lcccccc@{}}
    \toprule
    & \multicolumn{2}{c}{\textbf{Inductive $e,r$}} & \multicolumn{2}{c}{\textbf{Inductive $e$}} & \multicolumn{2}{c}{\textbf{Pretraining}} \\
    \cmidrule(lr){2-3} \cmidrule(lr){4-5} \cmidrule{6-7}
    Model & MRR & Hits@10 & MRR & Hits@10 & MRR & Hits@10 \\
    \midrule 
    \textsc{Ultra} & \textbf{34.5} & \textbf{51.3} & \underline{43.1} & 56.6 & 40.7& 56.8  \\
    \SIGILns$(0)$ & \underline{32.6} & \underline{50.1} & 39.5 & 55.0 & 40.7 & 55.6 \\
    \SIGILns$(1)$ & 32.4 & 49.2 & \underline{43.1} & \underline{56.8} & \textbf{43.7} & \textbf{59.0} \\
    \SIGILns$(2)$ & 32.3 & 49.8 & \textbf{43.5} & \textbf{58.3} & \underline{43.6} & \underline{58.9} \\
    \bottomrule
  \end{tabular}
  \caption{KG reasoning results against ULTRA. ULTRA can be reproduced within the \SIGIL framework by treating KG relations as one-hot feature vectors. \SIGILns($k$) denotes a \SIGILLP model with $k$-th order SIG.}
    \label{tab:kg_experiment}

\end{table}
 
\paragraph{Results.} Table~\ref{tab:kg_experiment} reports MRR (Mean Reciprocal Rank) and Hits@10. \textsc{Sigil}(0) reproduces ULTRA across all three groups to within 2.1 MRR and 1.2 Hits@10, matching it exactly on pretraining MRR (40.7) and exceeding it on entity-inductive Hits@10. Because the relation graph is an actual SIG (Theorem~\ref{prop:kgfm}), these residuals are attributable to existing differences, such as more complex interaction operators and aggregations of \SIGIL, that can empirically improve expressivity, consistent with the findings of~\cite{zhang2024trix}. Higher orders, implemented via sweeps on $K$ values, behave in a regime-dependent way. Relative to \SIGILns(0), order 2 gains MRR on the entity-inductive group and on the pretraining group, but loses MRR on the fully-inductive $(e,r)$ group. Hence, we do not claim that higher orders are uniformly preferable, consistent with \cite{huang2025how}. Therefore, greater
expressivity does not automatically yield better-performing KGFMs, and characterizing when it pays off is a natural extension of the analysis to the multi-order setting.
 
\subsection{Node Classification}\label{sec:nc}
 
\paragraph{Datasets and Baselines.} We follow GraphAny~\cite{zhao2025graphany} in dataset selection and splits, evaluating
\SIGILNC on 26 benchmarks whose feature dimensionalities and class counts
(2 to 70) vary widely. We use feature normalization and reduce input feature dimensions beyond 1024
with PCA, following~\cite{finkelshtein2026equivariance}. On featureless datasets from~\cite{Ribeiro_2017}, we initialize features with DeepWalk~\cite{Perozzi_2014}. We compare against four
fully-inductive GFMs designed specifically for node classification: GraphAny~\cite{zhao2025graphany}, TS-MEAN~\cite{finkelshtein2026equivariance}, TAG~\cite{hayler2025of} and RGVT~\cite{lee2026view}, and defer transductive baselines (MLP, GCN,
GAT) to Appendix~\ref{sec:main_results}. \SIGILNC is pretrained on Cora and applied to every target graph without gradient updates to the SIG encoder. A light MLP is then fitted on the target's training labels to map the unified representation into its label space. \SIGILNC therefore realizes fully-inductive representation learning in the sense of~\cite{lee2026view}, rather
than end-to-end fully-inductive classification. Full setup in Appendix~\ref{sec:hyperparams}.
 
\begin{table}[t]

  \centering
  \footnotesize
  \setlength{\tabcolsep}{4pt}
  \begin{tabular}{@{}lccccc@{}}
    \toprule
    & GraphAny & TS-MEAN & TAG & RGVT & \SIGILNC \\
    \midrule
    Avg.\ Acc.\ (\%) & 64.51 & 65.09 & \underline{71.57} & \textbf{71.78} & 66.10 \\
    \# Best (of 26)  & 0 & 0 & \textbf{11} & \underline{10} & 5 \\
    \# Top-2 (of 26) & 1 & 2 & \underline{17} & \textbf{22} & 10 \\
    \bottomrule
  \end{tabular}
  \caption{Node classification against fully-inductive baselines, aggregated over the 26 benchmarks; per-dataset results are in Appendix~\ref{sec:main_results} (Table~\ref{tab:main_nc}).}
  \label{tab:nc_summary}
\end{table}

\paragraph{Results.} Table~\ref{tab:nc_summary} reports average accuracy and the winning statistics of GFMs for node classification on the 26-dataset benchmark, with full experimental results available in Appendix~\ref{sec:main_results}. \SIGILNC does not match the specialized models: at 66.10
it trails RGVT (71.78) and TAG (71.57), while placing above GraphAny (64.51) and TS-MEAN (65.09), and is best on
5 of 26 datasets and best-or-second on 10, whereas GraphAny and TS-MEAN, within a point of it on average, win none and reach the top two on 1 and 2 respectively. \SIGILNC is therefore not uniformly mid-ranked but strong on a subset of graphs and weak on others, and its wins fall on graphs far from Cora in both
dimensionality and label space, which indicates that the SIG embeddings enable transferability to different domains.
We attribute the existing gap to the cost of unification.
Compressing arbitrary feature spaces into a fixed $d_h$ leaves the original coordinates individually inaccessible to the decoder, which consequently requires
more supervision than methods that retain the raw features. In Appendix~\ref{sec:more_experiments}, we vary the
number of labeled nodes per class to test this. 
Node classification is thus the weakest of our three regimes, yet the results still signal that learning on SIGs is a valid method for designing node classification KGFMs that can be competitive with existing, established methods.
\section{Related Work}\label{sec:related_work}

\subsection{Graph Foundation Models} \label{sec:GFM} GFMs are designed to learn representations that generalizes across various graph datasets, domains and dimensionalities, unlike GNNs~\cite{kipf2017semisupervised, hamilton2017inductive, velickovic2018graph}, which assume fixed feature orderings and dimensions, and are trained in a transductive setting. Several approaches have been proposed for designing GFMs for varying settings, domains, and tasks. These include aligning feature spaces through projections \cite{xia2024opengraph, xia2024anygraph}, connecting node features a graph structure \cite{frasca2024foundationmodelsgraphsanalysis, galkin2023ultra, zhang2024trix}, using prior-fitted networks and tabular foundation models \cite{eremeev2026graphpfn,choi2026learning,hayler2025of}, and utilizing LLMs for textual attributed graphs \cite{liu2024one}. A key open challenge, not fully addressed by existing approaches, is how to encode many graphs of different domains to a unified latent space. Unlike domains such as natural language processing, where precomputed static embeddings allows for any input to be encoded to the same space, in the graph domain this remains a challenge due to the heterogeneity of graph datasets and variance in their dimensionalities. In this work, we propose a framework that combines the graph structure and the signal from the node/edge features and outputs node embeddings in a unified, transferable input space that can support training of GNNs for zero-shot generalization. 

\subsection{Knowledge Graph Foundation Models}\label{sec:KGFM}
KGFMs are a family of GFMs designed for zero-shot reasoning on knowledge graphs (KGs), and can infer entities and relations that were unseen during their training by learning transferable relation embeddings from invariant graph motifs and relation interactions on the KG. Notable examples are ULTRA~\cite{galkin2023ultra}, RMPI~\cite{geng2023relational}, InGRAM~\cite{lee2023ingram} and TRIX~\cite{zhang2024trix}, all leveraging a new relation graph, which models co-occurrences of the relations (edge types) in graph motifs of the original KG. Despite the demonstrated generalization capabilities in the KG domain, these models are still limited to discrete features, and cannot be properly applied to graphs with continuous node/edge features. In this work, we propose a framework that generalizes to continuous features. By rethinking discrete relation types as one-hot edge feature vectors, we design a model that can be applied to a KG in a discrete case, but is also applicable to graphs with continuous input spaces. 


\section{Conclusions}\label{sec:conclusion}
We introduced \SIGILns, a framework that lifts any attributed graph to a structural interaction graph over its feature dimensions and learns transferable representations on it via relational message passing, mapping graphs of arbitrary dimensionality into a shared space of fixed width. The construction is equivariant to node, and feature permutations, and recovers the relation graph of a knowledge graph foundation model exactly when instantiated on one-hot relation indicators. Empirically, \SIGILLP is the strongest fully-inductive link predictor on every attributed benchmark and remains competitive on non-attributed graphs; the same construction reproduces ULTRA on its inductive benchmark, and yields a node encoder that ranks best or second-best on 10 of 26 classification benchmarks. Transferability, relational reasoning and edge-level tasks therefore arise from a single mechanism rather than from separate designs. For future work, designing and refining architectures for learning on SIGs beyond simple MPNNs is a natural next step, as well as studying the expressive power of SIGs, what information they preserve or lose and how it effects the SIG embeddings. 

\bibliography{aaai2027.bib}

\ifthenelse{\boolean{showappendix}}{
\onecolumn
\appendix

\section{Constructing The SIG}
\label{sec:sig}

The SIG is a deterministic, equivariant and parameter-free function of $(\X,\A)$, computed once per graph and
cached. Let $\rho$ denote row-wise $\ell_2$ normalization and
$\tau(\mathbf{M})=\mathbf{M}/\operatorname{tr}(\mathbf{M})$ trace normalization (identity when
$|\operatorname{tr}(\mathbf{M})|\le\varepsilon$).

\paragraph{Views.}
With $\hat{\A}=\mathbf{D}^{-\frac{1}{2}}\A\mathbf{D}^{-\frac{1}{2}}$ normalized and
self-loop-free --- so that order $0$ (ego) stays distinct from order $k$ (pure $k$-hop context):
\begin{equation*}
  \X^{(k)} \;=\; \rho\!\left(\hat{\A}^{k}\X\right)\in\mathbb{R}^{n\times d},
  \qquad k=0,\dots,K,
\end{equation*}
where the recursion runs on the unnormalized iterate and $\rho$ is applied only to the stored view.

\paragraph{Edge features.}
Let $\mathbf{B}_h,\mathbf{B}_t\in\{0,1\}^{n\times m}$ be the head and tail incidence matrices, defined as:
\begin{equation*}
    \mathbf{B}_{h_{ie}} = \mathds{1}[\exists v\in\V \ s.t\ e=(i,v) ],\quad \mathbf{B}_{t_{ie}} = \mathds{1}[\exists v\in\V \ s.t\ e=(v,i) ]
\end{equation*}
so
$\mathbf{B}_h^{\!\top}\X^{(k)}\in\mathbb{R}^{m\times d}$ gathers head-endpoint features. Per-edge
features are an entrywise map of the two endpoint blocks,
$\mathbf{F}^{(k)}=g(\mathbf{B}_h^{\!\top}\X^{(k)},\mathbf{B}_t^{\!\top}\X^{(k)})\in\mathbb{R}^{m\times d}$:
\begin{equation*}
  \quad
  \underbrace{\left|(\mathbf{B}_t-\mathbf{B}_h)^{\!\top}\X^{(k)}\right|}_{\textsc{absdiff}},
  \quad
  \underbrace{(\mathbf{B}_h^{\!\top}\X^{(k)})\odot(\mathbf{B}_t^{\!\top}\X^{(k)})}_{\textsc{hadamard}},
  \quad
  \underbrace{\left[\mathbf{B}_h^{\!\top}\X^{(k)}\,\big\|\,\mathbf{B}_t^{\!\top}\X^{(k)}\right]}_{\textsc{concat}}
\end{equation*}

\paragraph{Role aggregation.}
Each edge feature returns to its endpoints, separately by role, with degree-normalized mean
aggregation:
\begin{equation*}
  \Bar{\X}_h^{(k)}=\mathbf{D}_h^{-1}\mathbf{B}_h\mathbf{F}^{(k)},
  \qquad
  \Bar{\X}_t^{(k)}=\mathbf{D}_t^{-1}\mathbf{B}_t\mathbf{F}^{(k)}
  \;\in\;\mathbb{R}^{n\times d},
  \qquad
  \mathbf{D}_\ast=\max\!\big(\operatorname{diag}(\mathbf{B}_\ast\mathbf{1}_m),\mathbf{I}_n\big),
\end{equation*}
the clamp passing isolated nodes through as zero rows. Row $i$ of $\Bar{\X}_h^{(k)}$ is the mean
order-$k$ edge feature over the edges where $i$ is the head. This is the only step that separates
the two roles, and it is what makes the construction direction-aware.

\paragraph{Interactions and stacking.}
The SIG edge weights are the four role-crossed products, where multiplications cancels the node axis:
\begin{equation*}
  \Bar{\A}^{(k)}_{hh}=\tau\!\left(\Bar{\X}_h^{(k)\top}\Bar{\X}_h^{(k)}\right),\quad
  \Bar{\A}^{(k)}_{tt}=\tau\!\left(\Bar{\X}_t^{(k)\top}\Bar{\X}_t^{(k)}\right),\quad
  \Bar{\A}^{(k)}_{ht}=\tau\!\left(\Bar{\X}_h^{(k)\top}\Bar{\X}_t^{(k)}\right),\quad
  \Bar{\A}^{(k)}_{th}=\mathbf{\bar{A}}^{(k)\top}_{ht}
\end{equation*}
each in $\mathbb{R}^{d\times d}$. Entry $(f,f')$ of $\Bar{\A}^{(k)}_{ht}$ accumulates over all
nodes how strongly dimension $f$ in the head role aligns with dimension $f'$ in the tail role - 
this is the interaction the SIG encodes as an edge between feature nodes $f$ and $f'$. Stacking over
orders,
\begin{equation*}
  \GS=\operatorname{stack}_{k=0}^{K}\left(\Bar{\A}^{(k)}_{hh},\Bar{\A}^{(k)}_{tt},\Bar{\A}^{(k)}_{ht},\Bar{\A}^{(k)}_{th}\right)
  \in\mathbb{R}^{d\times d\times R},
  \qquad R=4(K+1),
\end{equation*}

\paragraph{Special cases.}
Given native edge features $\mathbf{F}$, the view and edge-feature steps are skipped and $\mathbf{F}$
enters role aggregation directly, giving an order-$0$ SIG with $R=4$. For a knowledge graph, taking
$\mathbf{F}=\mathbf{Z}\in\{0,1\}^{m\times|\mathcal{R}|}$, the one-hot relation indicator, makes the
rows of $\Bar{\X}_h=\mathbf{B}_h\mathbf{Z}$ and $\Bar{\X}_t = \mathbf{B}_t\mathbf{Z}$ the head/tail
relation-role incidences of each nodem and the resulting $\GS$ is exactly ULTRA's relation graph~\cite{galkin2023ultra}.

\section{Complexity Analysis for SIG Construction}\label{sec:complex}

Throughout, $n$ and $m$ are the node and edge counts of the input graph, $d$ the feature
dimension, $K$ the number of propagation orders, $R=4(K+1)$ the SIG relation count, and
$\bar{k}=m/n$ the average degree. The encoder that consumes the SIG has $T$ layers and width
$d_h$. The construction is deterministic and parameter-free, so it is computed once per graph and
cached; every cost below is paid once and amortized over all training steps on that graph.

\paragraph{Stage-by-stage cost.}
Each stage of \S\ref{sec:sig} streams over either the edge axis or the node axis:

\begin{center}
\begin{tabular}{llll}
  \toprule
  \footnotesize
  Stage & Time & Working memory & Retained \\
  \midrule
  Views $\X^{(k)}$, $k=0..K$                                  & $O(Kmd)$   & $O(nd)$  & $O(Knd)$  \\
  Edge features $\mathbf{F}^{(k)}$                            & $O(Kmd)$   & $O(md)$  & -       \\
  Role aggregation $\Bar{\X}^{(k)}_h,\Bar{\X}^{(k)}_t$    & $O(Kmd)$   & $O(nd)$  & -       \\
  Interaction operators $\Bar{\A}^{(k)}_\ast$               & $O(Knd^2)$ & $O(d^2)$ & $O(Kd^2)$ \\
  \midrule
  Total & $O\!\left(K(md+nd^2)\right)$ & $O(md+nd)$ & $O(K(nd+d^2))$ \\
  \bottomrule
\end{tabular}
\end{center}

\begin{figure}[h!]
    \centering
    \includegraphics[width=0.5\linewidth]{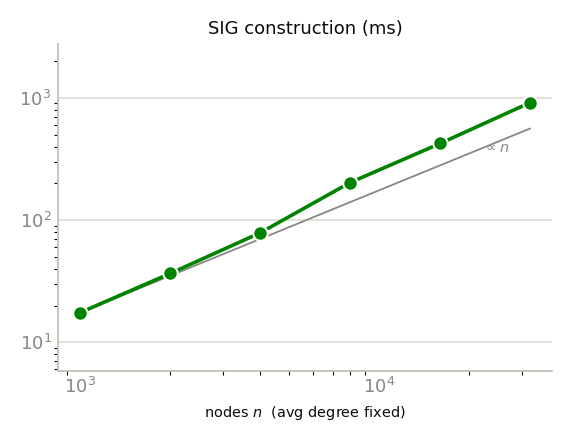}
    \caption{SIG construction time in ms vs. \#nodes (fixed average degree $\bar{k}$).}
    \label{fig:placeholder}
\end{figure}

The first three stages are all $O(Kmd)$: $\hat{\A}$ has $O(m)$ nonzeros, so each of the $K$
propagation steps costs $O(md)$. the entry-wise map $g$ touches each of the $m$ edges once per
order, and role aggregation is two sparse products $\mathbf{B}_\ast\mathbf{F}^{(k)}$ with $m$
nonzeros each, due to each edge being indicates once in every $\mathbf{B}_\ast$. 
Only the interaction stage leaves the edge axis: per order it forms three $d\times d$
products $\Bar{\X}^\top\Bar{\X}$ from $[d,n]\times[n,d]$ operands at $nd^2$ multiply, adds
each. We note here that by capping $d$ at 1024 via dimension reduction (like PCA), we can treat it as a constant, yet we maintain the analysis for the broader, more general case.

\section{Proofs}\label{sec:proofs}
\subsection{Proof of Theorem~\ref{prop:kgfm}}
\begin{proof}[Proof sketch]
By the argument above, the per-node in/out summaries equal $M_t$ and $M_h$
(up to degree normalization). The Gram summarization
$\bar{\A} = \bar{\X}^\top \bar{\X}$ applied \emph{within} and \emph{across} the
in/out channels yields the four products
$M_h^\top M_h,\, M_t^\top M_t,\, M_h^\top M_t,\, M_t^\top M_h$, which are exactly
$(\star)$. Stacking these as typed slices of $\GS$ reproduces
ULTRA's relation graph, and the relational MPNN $\MS$ of
\S\ref{subsec:sigil-rep} has the same functional form as the GNN ULTRA runs on that
graph. 
\end{proof}

\subsection{Proof of Lemma~\ref{prop:sig}}
\begin{proof}
The propagated features satisfy
$\X^{(k)} = \A^k \X \mapsto (P\A P^\top)^k (P\X Q^\top) = P \A^k P^\top P \X Q^\top
= P \X^{(k)} Q^\top$,
using $P^\top P = I$. Choosing an interaction operator $g$ that acts
coordinate-wise in the feature dimension (e.g., absolut difference or element-wise
product) gives $g(uQ^\top, vQ^\top) = g(u,v)Q^\top$, so each edge feature obeys
$\x^{(k)}_{ij}\mapsto \x^{(k)}_{ij}Q^\top$ with edges relabeled by the node
permutation. Neighborhood mean aggregation is node-equivariant, hence
$\bar{\X}^{(k)} \mapsto P\,\bar{\X}^{(k)}Q^\top$. Finally, the Gram summarization
cancels the node permutation:
\[
  \bar{\A}^{(k)} = \bar{\X}^{(k)\top}\bar{\X}^{(k)}
  \;\mapsto\;
  (P\bar{\X}^{(k)}Q^\top)^\top (P\bar{\X}^{(k)}Q^\top)
  = Q\,\bar{\X}^{(k)\top} P^\top P\, \bar{\X}^{(k)} Q^\top
  = Q\,\bar{\A}^{(k)}Q^\top 
\]
\end{proof}

\subsection{Proof of Proposition~\ref{prop:equivariance}}

\begin{proof}
Assume inductively that $\Theta^{(t)}\mapsto Q\,\Theta^{(t)}$. The
relational update
$\Theta^{(t+1)} = \Theta^{(t)}\W_{self}
  + \sum_{k} \bar{\A}^{(k)}\Theta^{(t)}\W_k$,
whose weights act on the fixed hidden dimension $d_h$, transforms as
\[
  \Theta^{(t+1)} \;\mapsto\;
  Q\Theta^{(t)}\W_{self}
  + \sum_k \big(Q\bar{\A}^{(k)}Q^\top\big)\big(Q\Theta^{(t)}\big)\W_k
  =  Q\Big(\Theta^{(t)}\W_{self} + \sum_k \bar{\A}^{(k)}\Theta^{(t)}\W_k\Big)
  = Q\,\Theta^{(t+1)}
\]
so $\Theta^{(T)}\mapsto Q\,\Theta^{(T)}$. Therefore
\[
  \mathbf{H} = \X\Theta^{(T)}
  \;\mapsto\;
  (P\X Q^\top)(Q\Theta^{(T)})
  = P \X Q^\top Q \Theta^{(T)}
  = P \X \Theta^{(T)}
  = P \mathbf{H}
\]
where $Q^\top Q = I$ eliminates the feature permutation. The auxiliary views
$\mathbf{H}^{(k)} = \X^{(k)}\Theta^{(T)}$ satisfy $\mathbf{H}^{(k)}\mapsto P \mathbf{H}^{(k)}$ by the same
argument. 
\end{proof}

\section{Detailed Experimental Setups}\label{sec:hyperparams}
\subsection{Experimental Setup for Link Prediction}

\paragraph{\SIGILLP Architecture.}
\SIGILLP is designed to inference \emph{conditioned} node representation as a direct generalization of KGFMs~\cite{galkin2023ultra, zhang2024trix, huang2025how}. After constructing $\GS$, we leverage two NBFNet modules: $\MS$ for $\GS$ and the $\mathcal{M}$ for $\G$. Given a query node $v$ for which we need to rank candidate nodes $u\in\V$, we initialize the SIG node embedding of the feature $f$ as:
\begin{align*}
    \theta_{f\mid v}^{(0)} = \X_{v,f}\cdot\mathbf{1}_{d_h} 
\end{align*}
which is the value of the $f$ for $v$, multiplied by an all-ones vector of dimension $d_h$. This initialization for every feature creates \emph{conditioned} feature embeddings $\Theta^{(T)}\mid v$, which in turns enables the downstream model $\mathcal{M}$ to learn node representations that are conditioned not only on the structure of $v$, but also on it's actual features. The rest of the pipelines follows as the main flow described in \S\ref{sec:sigil} - transforming the original node features to $\R^{d_h}$ via $\Theta^{(T)}$, which in turns enables $\mathcal{M}$ to learn in the weight space of $\R^{d_h\times d_h}$ regardless of the original graph's dimension. \SIGILLP is trained to minimize cross entropy loss of a positive links against a set of $n$ negative links:
\begin{align*}
    \mathcal{L} = -\log p((v,u)) - \frac{1}{n}\sum_{i=1}^n \log p((v_i', u_i'))
\end{align*}
Where $(v,u)\in \E$ is a positive link in the graph and $\{(v_i', u_i')\}_{i=1}^n$ are negative links that are not in the graph.
\begin{table}[h!]
\centering
\caption{SIGIL-LP architecture parameters for main results.}
\label{tab:lp-arch}
\begin{tabular}{llc}
\toprule
& \textbf{Hyperparameter} & \textbf{Value} \\
\midrule
\multicolumn{3}{l}{\emph{Shared}} \\
& $d_h$ & 128 \\
& Feature preprocessing & PCA to 1024, standardize, $\ell_2$ \\
& Edge interaction & \textsc{Absdiff} \\
\midrule
\multicolumn{3}{l}{\emph{SIG encoder}} \\
& Layers & 3 \\
& SIG channels $R$ & 4 (hh, tt, ht, th) \\
& Relation operator & DistMult (diagonal $w_r$ + shared linear) \\
& Normalization / residual & LayerNorm; short-cut + boundary re-injection \\
\midrule
\multicolumn{3}{l}{\emph{Link predictor}} \\
& Layers & 6 \\
& Message function & DistMult \\
& Aggregation & PNA \\
& Scorer & MLP $128 \to 128 \to 1$ (ReLU) \\
\bottomrule
\end{tabular}
\end{table}

\begin{table}[h!]
\centering
\caption{Training configuration for the Cora-trained SIGIL-LP checkpoint.}
\label{tab:lp-train}
\begin{tabular}{lc}
\toprule
\textbf{Hyperparameter} & \textbf{Value} \\
\midrule
Training graph & Cora \\
Optimizer & AdamW \\
Learning rate & $5\times10^{-4}$ \\
Steps & 2{,}000 \\
Batch size (positive edges) & 256 \\
Negatives per positive & 1 \\
Edge split (train/val/test) & 70\% / 10\% / 20\% \\
\bottomrule
\end{tabular}
\end{table}
\paragraph{Evaluation.} 
Our evaluation datasets and baselines are briefly discussed in~\ref{sec:lp_attr}. For these datasets, we largely follow a conventional evaluation protocol, in which each dataset is splits into train, validation and test links with sizes of $70\%/10\%/20\%$ of the total links in the graph. As \SIGILLP and the other fully-inductive baselines run zero-shot on the graph, the train set can be considered as a context set. For each positive test links, we evaluate its score as ranked against a set of negative links, equals in size to the total set of test positives. For the non-attributed datasets only and the fully-inductive methods only (\SIGILLP and UniLP), we use the validation set edges in test time as part of the train set as an additional context.

\subsection{Experimental Setup for KG Reasoning}
\paragraph{\SIGILLP Architecture.}
\SIGILLP architecture for the KG reasoning experiment is derived via the special case construction of the SIG, detailed in \S\ref{sec:sig}. As each edge feature is ultimately a one hot indicator of a discrete relation, the SIG is exactly the relation graph of a KGFM such as ULTRA. The parameter choices and training regime follows ULTRA to realize the equivalence and subsumption of ULTRA inside \SIGILns.
\begin{table}[t]
\centering
\caption{\SIGILLP architecture parameters for KG results. The SIG order $K$ changes per checkpoint.}
\label{tab:kgfm-arch}
\begin{tabular}{llc}
\toprule
& \textbf{Hyperparameter} & \textbf{Value} \\
\midrule
\multicolumn{3}{l}{\emph{Shared}} \\
& Hidden dimension $d_h$ & 64 \\
\midrule
\multicolumn{3}{l}{\emph{Relation-SIG encoder}} \\
& Layers & 6 \\
& SIG channels $R$ & $4(K{+}1) = 4$ (hh, tt, ht, th) \\
& Relation operator & DistMult (diagonal $w_r$ + shared linear) \\
& Normalization / residual & LayerNorm; short-cut + boundary re-injection \\
\midrule
\multicolumn{3}{l}{\emph{Link predictor}} \\
& Layers & 6 \\
& Message function & DistMult \\
& Aggregation & sum \\
& Scorer & MLP $64 \to 64 \to 1$ (ReLU) \\
\bottomrule
\end{tabular}
\end{table}

\begin{table}[t]
\centering
\caption{Pretraining configuration for \SIGILLP in the KG experiments.}
\label{tab:kgfm-train}
\begin{tabular}{lc}
\toprule
\textbf{Hyperparameter} & \textbf{Value} \\
\midrule
Pretraining mixture & FB15k-237, WN18RR, CoDEx-Medium \\
Optimizer & AdamW \\
Learning rate & $5\times10^{-4}$ \\
Steps & 20{,}000 \\
Batch size (query triples) & 8 \\
\bottomrule
\end{tabular}
\end{table}
\paragraph{Evaluation.}
Our evaluation setup is largely based on ULTRA's KG benchmark, which is based on 54 KGs divided into three sets - inductive $(e, r)$, inductive $(e)$ and transductive, and 3 pretraining graphs. We evaluate only on the first two + the pretraining graphs. The rest of our experimental setup, including train-test splits and positive-negative sampling is identical to the standard evaluation setups used by the KGFM literature~\cite{galkin2023ultra, zhang2024trix, huang2025how}.
\subsection{Experimental Setup for Node Classification}

\paragraph{\SIGILNC Architecture.}
\SIGILNC initialize feature embeddings on the SIG via laplacian eigenmaps, then runs the SIG encoder to produce feature embeddings $\Theta^{(T)}$ which are used to transform the original node features $\X\in\R^{n\times d}$ to $\R^{n\times d_h}$. At evaluation time, we fit a light MLP on the fully-inductive node embeddings from \SIGIL to the label space of the downstream task.
\begin{table}[h!]
\centering
\caption{\SIGILNC architecture parameters for main results.}
\label{tab:nc-arch}
\begin{tabular}{llc}
\toprule
& \textbf{Hyperparameter} & \textbf{Value} \\
\midrule
\multicolumn{3}{l}{\emph{Shared}} \\
& $d_h$ & 512 \\
& $K$ & 3 \\
& Edge interaction $g(x_h, x_t)$ & \textsc{Absdiff} \\
& Feature preprocessing & PCA to 1024, standardize, $\ell_2$ \\
\midrule
\multicolumn{3}{l}{\emph{SIG encoder}} \\
& Layers & 1 \\
& Relation operator & RGCN \\
\midrule
\multicolumn{3}{l}{\emph{Label decoder}} \\
& Head & 2-layer MLP \\
\bottomrule
\end{tabular}
\end{table}

\begin{table}[h!]
\centering
\caption{Training configuration for the Cora \SIGILNC
checkpoint.}
\label{tab:nc-train}
\begin{tabular}{lc}
\toprule
\textbf{Hyperparameter} & \textbf{Value} \\
\midrule
Training graph & Cora \\
Inner objective & MLP $\to$ CE on train labels \\
Optimizer & AdamW \\
Learning rate & $1\times10^{-4}$ \\
Weight decay & $1\times10^{-4}$ \\
Epochs $\times$ steps/epoch & $10 \times 100 = 1{,}000$ meta-steps \\
Batch size & full train split \\
\bottomrule
\end{tabular}
\end{table}
\paragraph{Evaluation.} We largely follow GraphAny~\cite{zhao2025graphany} in benchmark selection, as detailed in \S\ref{sec:nc}. We use official splits for every dataset where available, otherwise we sample 20 samples per class as training data, then split the reminder evenly for validation and test data. We note that each of the feature views $\{ \X^{(k)}\}_{k=1}^K$ can be transformed using $\Theta^{(T)}$, hence we fit an MLP on the train set for each $k$ and use the validation set to select the best view to evaluate on the test.

\section{Detailed Main Results}\label{sec:main_results}

\subsection{Link Prediction Results}
\begin{table*}[h!]
  \caption{Link prediction results across attributed and non-attributed
  benchmarks. \SIGILLP, UniLP and TFMLinker are fully-inductive and evaluated
  zero-shot on every target graph; all other methods are transductive and
  trained on each target graph. Best in \textbf{bold}, second-best
  \underline{underlined}. $\geq$24h denotes methods where per-query inference
  exceeds 24 hours; ``--'' marks results which are not producible due to lack of published code.}
  \label{tab:lp_main}
  \centering
  \footnotesize
  \setlength{\tabcolsep}{1.5pt}
  \begin{tabular}{@{}lccccccccccc@{}}
    \toprule
    & \multicolumn{7}{c}{\textbf{Attributed}} & \multicolumn{4}{c}{\textbf{Non-attributed}} \\
    \cmidrule(lr){2-8} \cmidrule(lr){9-12}
    Dataset & Cora & CiteSeer & PubMed & AmzComp & AmzPhoto & CoCS & CoPhys & C.ele & USAir & PB & NS \\
    Metric & \emph{H@100} & \emph{H@100} & \emph{H@100} & \emph{H@50} & \emph{H@50} & \emph{H@50} & \emph{H@50} & \emph{H@50} & \emph{H@50} & \emph{H@50} & \emph{H@50} \\
    \midrule
    \multicolumn{12}{c}{\textbf{Transductive}} \\
    \midrule
    NBFNet    & 71.65\scriptsize{$\pm$}2.27 & 74.07\scriptsize{$\pm$}1.75 & 58.73\scriptsize{$\pm$}1.99 & 13.14\scriptsize{$\pm$}1.04 & 21.35\scriptsize{$\pm$}3.36 & 71.86\scriptsize{$\pm$}0.93 & 66.71\scriptsize{$\pm$}4.39 & 38.00\scriptsize{$\pm$}4.92 & 52.71\scriptsize{$\pm$}5.27 & 7.17\scriptsize{$\pm$}0.08 & \underline{89.60\scriptsize{$\pm$}1.83} \\
    MPLP+     & 79.93\scriptsize{$\pm$}1.35 & 83.96\scriptsize{$\pm$}1.89 & \underline{81.27\scriptsize{$\pm$}1.35} & \textbf{42.21\scriptsize{$\pm$}3.56} & \textbf{57.76\scriptsize{$\pm$}2.75} & \underline{75.55\scriptsize{$\pm$}1.46} & \textbf{76.36\scriptsize{$\pm$}1.40} & 62.56\scriptsize{$\pm$}4.79 & 85.08\scriptsize{$\pm$}1.54 & \underline{48.01\scriptsize{$\pm$}2.94} & 80.16\scriptsize{$\pm$}1.18 \\
    NCNC      & \textbf{89.65\scriptsize{$\pm$}1.36} & \textbf{93.47\scriptsize{$\pm$}0.95} & \textbf{81.29\scriptsize{$\pm$}0.95} & \underline{36.48\scriptsize{$\pm$}4.16} & \underline{47.98\scriptsize{$\pm$}2.36} & 74.65\scriptsize{$\pm$}1.23 & \underline{75.96\scriptsize{$\pm$}1.73} & 64.45\scriptsize{$\pm$}5.10 & 85.85\scriptsize{$\pm$}2.46 & 47.75\scriptsize{$\pm$}6.90 & 88.25\scriptsize{$\pm$}2.25 \\
    ELPH      & \underline{87.72\scriptsize{$\pm$}2.13} & \underline{93.44\scriptsize{$\pm$}0.53} & 72.99\scriptsize{$\pm$}1.43 & 29.01\scriptsize{$\pm$}2.66 & 43.51\scriptsize{$\pm$}2.37 & 72.26\scriptsize{$\pm$}2.58 & 65.80\scriptsize{$\pm$}2.26 & 60.51\scriptsize{$\pm$}5.72 & 84.52\scriptsize{$\pm$}2.08 & 43.58\scriptsize{$\pm$}3.48 & 86.08\scriptsize{$\pm$}0.69 \\
    \midrule
    \multicolumn{12}{c}{\textbf{Fully-inductive}} \\
    \midrule
    UniLP     & 74.01\scriptsize{$\pm$}2.58 & 67.12\scriptsize{$\pm$}1.94 & 59.09\scriptsize{$\pm$}0.87 & $\geq$ 24h & 34.71\scriptsize{$\pm$}1.48 & 69.38\scriptsize{$\pm$}1.70 & $\geq$ 24h & 65.20\scriptsize{$\pm$}4.40 & 85.98\scriptsize{$\pm$}2.00 & \textbf{48.14\scriptsize{$\pm$}2.99} & 89.09\scriptsize{$\pm$}2.05 \\
    TFMLinker & -- & -- & -- & -- & -- & -- & -- & \textbf{68.62\scriptsize{$\pm$}5.97} & \textbf{91.67\scriptsize{$\pm$}1.37} & 47.75\scriptsize{$\pm$}2.76 & \textbf{90.46\scriptsize{$\pm$}2.43} \\
    \SIGILLP & 77.99\scriptsize{$\pm$}1.72$^{\dagger}$ & 72.95\scriptsize{$\pm$}1.65 & 60.01\scriptsize{$\pm$}1.02 & 26.25\scriptsize{$\pm$}1.70 & 37.15\scriptsize{$\pm$}1.55 & \textbf{75.96\scriptsize{$\pm$}1.18} & 70.28\scriptsize{$\pm$}2.54 & \underline{68.22\scriptsize{$\pm$}2.41} & 88.39\scriptsize{$\pm$}0.67 & 45.55\scriptsize{$\pm$}3.72 & 88.69\scriptsize{$\pm$}1.30 \\
    \bottomrule
  \end{tabular}
\end{table*}
\newpage
\subsection{Knowledge Graph Reasoning Results}
\begin{table*}[h!]
  \caption{Zero-shot KG reasoning results on the 23 inductive $(e,r)$ graphs from
  ULTRA's benchmark (unseen entities and relations). Sigil$(K)$ denotes
  \SIGILLP with SIG order $K$. ULTRA
  numbers are the zero-shot results reported in Table~10 and Table~11 of~\cite{galkin2023ultra}. Best in \textbf{bold}, second-best \underline{underlined}.}
  \label{tab:kg_er}
  \centering
  \scriptsize
  \setlength{\tabcolsep}{3pt}
  \begin{tabular}{@{}lcccccccc@{}}
    \toprule
    & \multicolumn{2}{c}{ULTRA} & \multicolumn{2}{c}{\SIGIL$(0)$}
    & \multicolumn{2}{c}{\SIGIL$(1)$} & \multicolumn{2}{c}{\SIGIL$(2)$} \\
    \cmidrule(lr){2-3}\cmidrule(lr){4-5}\cmidrule(lr){6-7}\cmidrule(lr){8-9}
    Dataset & MRR & H@10 & MRR & H@10 & MRR & H@10 & MRR & H@10 \\
    \midrule
    FB-25       & \textbf{38.8} & \textbf{64.0} & \underline{35.0} & 60.7 & 34.3 & 60.2 & 34.8 & \underline{61.1} \\
    FB-50       & \textbf{33.8} & \textbf{54.3} & 28.7 & 51.0 & 29.3 & 50.4 & \underline{29.3} & \underline{51.3} \\
    FB-75       & \textbf{40.3} & \textbf{60.4} & 33.6 & 55.3 & \underline{34.2} & 56.1 & 33.8 & \underline{56.5} \\
    FB-100      & \textbf{44.9} & \textbf{64.2} & 34.5 & 56.6 & \underline{35.2} & 57.2 & 35.0 & \underline{57.9} \\
    WK-25       & \textbf{31.6} & \textbf{53.2} & 22.5 & 43.5 & 22.4 & 40.2 & \underline{23.7} & \underline{44.3} \\
    WK-50       & \textbf{16.6} & \textbf{32.4} & 12.9 & \underline{27.4} & \underline{13.1} & 25.9 & 12.1 & 26.5 \\
    WK-75       & \textbf{36.5} & \textbf{53.7} & 31.1 & 47.1 & \underline{32.3} & 46.7 & 31.6 & \underline{48.1} \\
    WK-100      & 16.4 & \textbf{28.6} & 16.3 & 27.8 & \textbf{16.9} & 27.3 & \underline{16.9} & \underline{28.0} \\
    NL-0        & \textbf{34.2} & \textbf{52.3} & \underline{30.3} & \underline{48.4} & 28.4 & 43.8 & 28.4 & 44.4 \\
    NL-25       & \textbf{39.5} & \textbf{56.9} & 27.7 & 41.3 & 26.6 & 38.7 & \underline{30.5} & \underline{43.7} \\
    NL-50       & \textbf{40.7} & \textbf{57.0} & \underline{32.2} & \underline{47.3} & 27.7 & 42.9 & 31.2 & 45.6 \\
    NL-75       & \textbf{36.8} & \textbf{54.7} & \underline{22.6} & \underline{40.4} & 20.3 & 35.9 & 22.2 & 39.4 \\
    NL-100      & \textbf{47.1} & \textbf{65.1} & 34.9 & \underline{53.8} & 34.4 & 53.3 & \underline{35.1} & 52.3 \\
    Metafam     & 23.8 & 64.4 & \underline{30.9} & \underline{84.0} & \textbf{34.6} & \textbf{87.8} & 26.3 & 74.2 \\
    FBNELL      & \textbf{48.5} & \textbf{65.2} & \underline{45.5} & \underline{63.3} & 45.4 & 62.6 & 39.6 & 60.7 \\
    MT1 tax     & 22.4 & 30.5 & \textbf{41.2} & \underline{50.2} & 39.0 & 50.1 & \underline{40.6} & \textbf{52.3} \\
    MT1 health  & 29.8 & 37.4 & \textbf{37.9} & \underline{45.8} & \underline{37.8} & \textbf{46.4} & 37.5 & 45.1 \\
    MT2 org     & \textbf{9.5}  & \textbf{15.9} & 8.9 & 14.7 & 8.6 & 13.9 & \underline{9.2} & \underline{15.8} \\
    MT2 sci     & 25.8 & 35.4 & \underline{37.5} & \underline{53.1} & 37.1 & 51.9 & \textbf{37.9} & \textbf{53.7} \\
    MT3 art     & 25.9 & 40.2 & \textbf{28.5} & \underline{43.2} & 27.9 & 42.5 & \underline{28.1} & \textbf{44.4} \\
    MT3 infra   & 61.9 & 75.5 & \underline{63.3} & \underline{78.2} & 63.2 & 77.9 & \textbf{64.2} & \textbf{78.4} \\
    MT4 sci     & 27.4 & \underline{44.9} & 27.8 & 42.6 & \underline{28.2} & 42.8 & \textbf{29.2} & \textbf{45.5} \\
    MT4 health  & 62.4 & 73.7 & \underline{66.5} & 76.2 & \textbf{67.6} & \underline{77.1} & 66.2 & \textbf{77.3} \\
    \midrule
    \emph{Average} & \textbf{34.5} & \textbf{51.3} & \underline{32.6} & \underline{50.1} & 32.4 & 49.2 & 32.3 & 49.8 \\
    \bottomrule
  \end{tabular}
\end{table*}

\begin{table*}[h!]
  \caption{Zero-shot KG reasoning results on the 23 inductive $(e)$ graphs from
  ULTRA's benchmark (unseen entities and relations). Sigil$(K)$ denotes
  \SIGILLP with SIG order $K$. ULTRA
  numbers are the zero-shot results reported in Table~10 and Table~11 of~\cite{galkin2023ultra}. Best in \textbf{bold}, second-best \underline{underlined}.}
  \label{tab:kg_e}
  \centering
  \scriptsize
  \setlength{\tabcolsep}{3pt}
  \begin{tabular}{@{}lcccccccc@{}}
    \toprule
    & \multicolumn{2}{c}{ULTRA} & \multicolumn{2}{c}{\SIGIL$(0)$}
    & \multicolumn{2}{c}{\SIGIL$(1)$} & \multicolumn{2}{c}{\SIGIL$(2)$} \\
    \cmidrule(lr){2-3}\cmidrule(lr){4-5}\cmidrule(lr){6-7}\cmidrule(lr){8-9}
    Dataset & MRR & H@10 & MRR & H@10 & MRR & H@10 & MRR & H@10 \\
    \midrule
    FB v1       & \underline{49.8} & 65.6 & 49.6 & \textbf{67.9} & 48.9 & 67.2 & \textbf{49.8} & \underline{67.8} \\
    FB v2       & 51.2 & 70.0 & \underline{53.1} & \underline{71.8} & 52.0 & 71.4 & \textbf{53.3} & \textbf{73.2} \\
    FB v3       & 49.1 & 65.4 & \underline{50.0} & \underline{66.4} & 48.5 & 65.4 & \textbf{50.2} & \textbf{66.9} \\
    FB v4       & 48.6 & \underline{67.7} & \underline{48.6} & 67.7 & 47.6 & 66.6 & \textbf{48.9} & \textbf{68.0} \\
    WN v1       & 64.8 & 76.8 & 18.9 & 38.9 & \underline{69.1} & \underline{78.4} & \textbf{69.4} & \textbf{79.4} \\
    WN v2       & 66.3 & 76.5 & 67.6 & \textbf{79.2} & \underline{68.2} & 78.0 & \textbf{69.1} & \underline{78.8} \\
    WN v3       & 37.6 & 47.6 & \textbf{46.1} & \underline{59.3} & \underline{46.0} & 58.9 & 45.5 & \textbf{59.8} \\
    WN v4       & 61.1 & 70.5 & 41.9 & 72.2 & \underline{64.1} & \underline{72.5} & \textbf{64.3} & \textbf{72.6} \\
    NELL v1     & \textbf{78.5} & \textbf{91.3} & 64.2 & 71.4 & 64.6 & 80.6 & \underline{68.6} & \underline{90.8} \\
    NELL v2     & 52.6 & 70.7 & \textbf{53.5} & \textbf{74.7} & \underline{53.1} & \underline{73.2} & 51.3 & 72.2 \\
    NELL v3     & \textbf{51.5} & 70.2 & 50.9 & \textbf{71.9} & \underline{51.3} & \underline{70.2} & 49.1 & 68.5 \\
    NELL v4     & 47.9 & 71.2 & 50.3 & 72.1 & \textbf{52.2} & \textbf{73.2} & \underline{50.6} & \underline{72.6} \\
    ILPC Small  & \textbf{30.2} & \textbf{44.3} & \underline{28.7} & 43.0 & 27.8 & 41.6 & 28.1 & \underline{43.2} \\
    ILPC Large  & \underline{29.0} & \textbf{42.4} & \textbf{29.8} & \underline{41.8} & 27.6 & 39.3 & 28.0 & 41.2 \\
    HM 1k       & 5.9  & 9.2  & \underline{6.7} & \underline{12.5} & 6.1 & 10.3 & \textbf{6.8} & \textbf{13.1} \\
    HM 3k       & 3.7  & 7.7  & \underline{6.4} & \underline{10.6} & 5.7 & 10.2 & \textbf{6.4} & \textbf{11.9} \\
    HM 5k       & 3.4  & 7.1  & \underline{5.9} & \underline{9.7}  & 5.1 & 8.5  & \textbf{6.0} & \textbf{10.8} \\
    HM indigo   & \textbf{44.0} & \textbf{64.8} & \underline{38.8} & \underline{59.3} & 37.3 & 57.7 & 37.8 & 58.3 \\
    \midrule
    \emph{Average} & 43.1 & 56.6 & 39.5 & 55.0 & \underline{43.1} & \underline{56.8} & \textbf{43.5} & \textbf{58.3} \\
    \bottomrule
  \end{tabular}
\end{table*}
 
\newpage
\subsection{Node Classification Results}
\begin{table*}[h!]
  \caption{Node classification accuracy (\%) on 26 benchmarks against
  \textbf{transductive} and \textbf{fully-inductive} baselines. Best in
  \textbf{bold}, second-best \underline{underlined}. $^{\dagger}$RGVT is
  pretrained on ogbn-arxiv. GraphAny, TS-MEAN and \SIGILNC are pretrained on
  Cora. TAG is pretrained on synthetic datasets. Evaluations for methods other
  than \SIGILNC are taken from~\cite{lee2026view},
  including their updated transductive baselines.}
  \label{tab:main_nc}
  \centering
  \footnotesize
  \begin{tabular}{lcccccccc}
    \toprule
    & \multicolumn{3}{c}{\textbf{Transductive}} & \multicolumn{4}{c}{\textbf{Fully-Inductive}} & \multicolumn{1}{c}{\textbf{Ours}} \\
    \cmidrule(lr){2-4} \cmidrule(lr){5-8} \cmidrule(lr){9-9}
    Dataset & MLP & GCN & GAT & GraphAny & TS-MEAN & TAG & RGVT & \SIGILNC \\
    \midrule
    Actor & \underline{33.95\scriptsize{$\pm$}0.80} & 30.37\scriptsize{$\pm$}0.44 & 28.57\scriptsize{$\pm$}0.82 & 27.91\scriptsize{$\pm$}0.16 & 28.09\scriptsize{$\pm$}0.93 & 31.18\scriptsize{$\pm$}0.18 & 32.68\scriptsize{$\pm$}1.26 & \textbf{36.49\scriptsize{$\pm$}0.50} \\
    AirBrazil & 23.08\scriptsize{$\pm$}5.83 & 63.08\scriptsize{$\pm$}3.44 & 44.62\scriptsize{$\pm$}14.54 & 33.07\scriptsize{$\pm$}16.68 & 39.23\scriptsize{$\pm$}5.70 & \underline{73.08\scriptsize{$\pm$}4.03} & \textbf{75.38\scriptsize{$\pm$}6.44} & 39.23\scriptsize{$\pm$}5.01 \\
    AirEurope & 21.25\scriptsize{$\pm$}2.31 & 41.50\scriptsize{$\pm$}0.84 & 40.00\scriptsize{$\pm$}1.93 & 40.50\scriptsize{$\pm$}7.01 & 35.88\scriptsize{$\pm$}6.91 & \underline{55.38\scriptsize{$\pm$}2.30} & \textbf{57.12\scriptsize{$\pm$}3.55} & 38.50\scriptsize{$\pm$}6.38 \\
    AirUSA & 22.88\scriptsize{$\pm$}1.46 & 53.30\scriptsize{$\pm$}1.18 & 51.24\scriptsize{$\pm$}2.90 & 43.46\scriptsize{$\pm$}1.45 & 42.34\scriptsize{$\pm$}2.12 & \textbf{60.50\scriptsize{$\pm$}0.80} & \underline{57.55\scriptsize{$\pm$}1.28} & 48.25\scriptsize{$\pm$}3.35 \\
    AmzComp & 58.28\scriptsize{$\pm$}2.98 & \textbf{84.69\scriptsize{$\pm$}0.18} & \underline{84.43\scriptsize{$\pm$}1.28} & 82.99\scriptsize{$\pm$}1.22 & 81.37\scriptsize{$\pm$}1.25 & 84.33\scriptsize{$\pm$}0.33 & 84.14\scriptsize{$\pm$}1.76 & 72.84\scriptsize{$\pm$}2.65 \\
    AmzPhoto & 68.20\scriptsize{$\pm$}0.88 & \underline{91.71\scriptsize{$\pm$}0.18} & 89.40\scriptsize{$\pm$}0.71 & 90.14\scriptsize{$\pm$}0.93 & 90.18\scriptsize{$\pm$}1.30 & 89.63\scriptsize{$\pm$}0.48 & \textbf{92.29\scriptsize{$\pm$}0.44} & 84.13\scriptsize{$\pm$}1.76 \\
    AmzRatings & 47.90\scriptsize{$\pm$}0.45 & \textbf{49.17\scriptsize{$\pm$}0.43} & \underline{48.99\scriptsize{$\pm$}0.56} & 42.84\scriptsize{$\pm$}0.04 & 42.27\scriptsize{$\pm$}1.40 & 44.34\scriptsize{$\pm$}0.32 & 45.58\scriptsize{$\pm$}1.16 & 48.49\scriptsize{$\pm$}2.39 \\
    BlogCatalog & 64.11\scriptsize{$\pm$}1.95 & \textbf{86.00\scriptsize{$\pm$}0.52} & 55.16\scriptsize{$\pm$}4.77 & 72.52\scriptsize{$\pm$}3.22 & 76.30\scriptsize{$\pm$}2.92 & 79.77\scriptsize{$\pm$}1.06 & \underline{81.39\scriptsize{$\pm$}2.75} & 68.77\scriptsize{$\pm$}1.07 \\
    Chameleon & 36.62\scriptsize{$\pm$}0.87 & 41.31\scriptsize{$\pm$}3.05 & 39.83\scriptsize{$\pm$}2.10 & 37.99\scriptsize{$\pm$}4.54 & 43.83\scriptsize{$\pm$}3.99 & \textbf{44.98\scriptsize{$\pm$}1.37} & \underline{44.74\scriptsize{$\pm$}2.17} & 42.84\scriptsize{$\pm$}5.01 \\
    Citeseer & 44.40\scriptsize{$\pm$}0.44 & 69.22\scriptsize{$\pm$}0.62 & \underline{69.46\scriptsize{$\pm$}1.36} & 68.90\scriptsize{$\pm$}0.07 & 68.66\scriptsize{$\pm$}0.19 & 68.08\scriptsize{$\pm$}0.61 & \textbf{70.12\scriptsize{$\pm$}1.47} & 53.70\scriptsize{$\pm$}2.16 \\
    CoCS & 85.88\scriptsize{$\pm$}0.93 & 90.60\scriptsize{$\pm$}0.09 & 90.27\scriptsize{$\pm$}0.57 & 90.47\scriptsize{$\pm$}0.63 & \underline{90.92\scriptsize{$\pm$}0.47} & 90.49\scriptsize{$\pm$}0.50 & \textbf{92.28\scriptsize{$\pm$}0.37} & 90.12\scriptsize{$\pm$}0.41 \\
    CoPhysics & 87.43\scriptsize{$\pm$}1.98 & \underline{92.85\scriptsize{$\pm$}0.24} & 91.22\scriptsize{$\pm$}2.06 & 92.70\scriptsize{$\pm$}0.54 & 92.61\scriptsize{$\pm$}0.61 & 92.31\scriptsize{$\pm$}0.27 & \textbf{93.18\scriptsize{$\pm$}0.51} & 92.76\scriptsize{$\pm$}0.43 \\
    Cornell & 67.57\scriptsize{$\pm$}5.06 & 45.41\scriptsize{$\pm$}3.52 & 40.00\scriptsize{$\pm$}2.26 & 64.86\scriptsize{$\pm$}1.91 & 68.65\scriptsize{$\pm$}2.42 & \textbf{75.14\scriptsize{$\pm$}2.16} & \underline{74.05\scriptsize{$\pm$}2.42} & 60.54\scriptsize{$\pm$}4.52 \\
    DBLP & 56.27\scriptsize{$\pm$}0.62 & \textbf{79.52\scriptsize{$\pm$}1.03} & \underline{79.07\scriptsize{$\pm$}0.78} & 71.73\scriptsize{$\pm$}0.94 & 66.42\scriptsize{$\pm$}3.65 & 71.92\scriptsize{$\pm$}1.46 & 78.12\scriptsize{$\pm$}1.19 & 75.12\scriptsize{$\pm$}0.55 \\
    FullCora & 33.54\scriptsize{$\pm$}0.64 & \underline{62.69\scriptsize{$\pm$}0.13} & 60.71\scriptsize{$\pm$}0.16 & 56.73\scriptsize{$\pm$}0.41 & 53.58\scriptsize{$\pm$}0.73 & 54.56\scriptsize{$\pm$}0.19 & \textbf{63.26\scriptsize{$\pm$}0.75} & 57.11\scriptsize{$\pm$}0.36 \\
    Minesweeper & 80.00\scriptsize{$\pm$}0.00 & 80.16\scriptsize{$\pm$}0.26 & 81.70\scriptsize{$\pm$}0.53 & 80.46\scriptsize{$\pm$}0.15 & 80.68\scriptsize{$\pm$}0.38 & \textbf{85.83\scriptsize{$\pm$}0.13} & 80.00\scriptsize{$\pm$}0.00 & \underline{85.02\scriptsize{$\pm$}0.70} \\
    ogbn-arxiv & 55.50\scriptsize{$\pm$}0.23 & 71.19\scriptsize{$\pm$}0.30 & \underline{71.89\scriptsize{$\pm$}0.24} & 58.62\scriptsize{$\pm$}0.05 & 56.33\scriptsize{$\pm$}2.58 & 66.70\scriptsize{$\pm$}0.21 & 70.33\scriptsize{$\pm$}0.35$^{\dagger}$ & \textbf{71.92\scriptsize{$\pm$}0.05} \\
    Pubmed & 69.50\scriptsize{$\pm$}1.79 & 76.90\scriptsize{$\pm$}0.42 & 76.26\scriptsize{$\pm$}0.70 & 76.60\scriptsize{$\pm$}0.31 & 74.98\scriptsize{$\pm$}0.56 & \textbf{78.96\scriptsize{$\pm$}0.43} & \underline{78.20\scriptsize{$\pm$}0.82} & 77.64\scriptsize{$\pm$}1.12 \\
    Questions & \textbf{97.33\scriptsize{$\pm$}0.06} & 97.06\scriptsize{$\pm$}0.02 & 97.06\scriptsize{$\pm$}0.02 & 97.06\scriptsize{$\pm$}0.03 & 97.02\scriptsize{$\pm$}0.01 & \underline{97.14\scriptsize{$\pm$}0.01} & 97.13\scriptsize{$\pm$}0.06 & 97.12\scriptsize{$\pm$}0.05 \\
    Roman & 65.80\scriptsize{$\pm$}0.35 & 45.33\scriptsize{$\pm$}0.25 & 43.42\scriptsize{$\pm$}0.18 & 64.25\scriptsize{$\pm$}0.64 & 66.36\scriptsize{$\pm$}1.02 & \underline{74.12\scriptsize{$\pm$}0.28} & 70.53\scriptsize{$\pm$}0.70 & \textbf{76.80\scriptsize{$\pm$}0.26} \\
    Squirrel & 30.36\scriptsize{$\pm$}0.78 & 38.67\scriptsize{$\pm$}1.84 & 38.78\scriptsize{$\pm$}2.39 & 37.25\scriptsize{$\pm$}2.65 & 39.80\scriptsize{$\pm$}0.53 & \textbf{46.29\scriptsize{$\pm$}1.01} & \underline{41.77\scriptsize{$\pm$}2.06} & 38.06\scriptsize{$\pm$}2.14 \\
    Texas & 48.65\scriptsize{$\pm$}4.01 & 51.35\scriptsize{$\pm$}2.71 & 54.05\scriptsize{$\pm$}2.41 & 71.89\scriptsize{$\pm$}1.48 & 73.51\scriptsize{$\pm$}4.01 & \textbf{81.08\scriptsize{$\pm$}2.09} & \underline{78.92\scriptsize{$\pm$}2.26} & 62.70\scriptsize{$\pm$}6.45 \\
    Tolokers & 78.16\scriptsize{$\pm$}0.02 & 80.60\scriptsize{$\pm$}0.12 & \underline{81.67\scriptsize{$\pm$}0.18} & 78.20\scriptsize{$\pm$}0.02 & 78.12\scriptsize{$\pm$}0.09 & \textbf{82.82\scriptsize{$\pm$}0.15} & 78.32\scriptsize{$\pm$}0.35 & 80.71\scriptsize{$\pm$}0.73 \\
    Wiki & 63.79\scriptsize{$\pm$}1.77 & 69.24\scriptsize{$\pm$}0.76 & 58.64\scriptsize{$\pm$}1.93 & 60.56\scriptsize{$\pm$}3.62 & 69.89\scriptsize{$\pm$}1.31 & \underline{70.09\scriptsize{$\pm$}0.92} & \textbf{72.26\scriptsize{$\pm$}4.66} & 68.38\scriptsize{$\pm$}2.02 \\
    WikiCS & 72.72\scriptsize{$\pm$}0.43 & 79.07\scriptsize{$\pm$}0.24 & \underline{79.44\scriptsize{$\pm$}0.40} & 74.39\scriptsize{$\pm$}0.71 & 74.16\scriptsize{$\pm$}2.07 & 79.07\scriptsize{$\pm$}0.54 & 78.92\scriptsize{$\pm$}0.58 & \textbf{79.49\scriptsize{$\pm$}0.37} \\
    Wisconsin & 66.67\scriptsize{$\pm$}3.51 & 37.25\scriptsize{$\pm$}1.64 & 52.94\scriptsize{$\pm$}3.10 & 61.18\scriptsize{$\pm$}5.08 & 61.18\scriptsize{$\pm$}11.38 & \textbf{83.14\scriptsize{$\pm$}1.33} & \underline{78.04\scriptsize{$\pm$}4.47} & 71.76\scriptsize{$\pm$}8.27 \\
    \midrule
    \textbf{Average} & 56.91 & 65.70 & 63.42 & 64.51 & 65.09 & \underline{71.57} & \textbf{71.78} & 66.10 \\
    \bottomrule
  \end{tabular}
\end{table*}
\newpage
\section{Additional Experiments}\label{sec:more_experiments}

\subsection{Ablations and Parameter Experiments}
\paragraph{Setup.} We set to examine the impact of different parameter choices and design choices on the performance of \SIGILLP. We conduct experiments on the order of the SIG $K$ (the number of hops in construction), the choice of the interaction function $g$ (between Hadamard product and absolute difference), and the number of SIG and Downstream NBFNet layers.

\paragraph{Results.} Figure~\ref{fig:g_hits50} presents results for selecting between \textsc{Absdiff} and \textsc{Hadamard} interaction operators. We can note that in general, one is not clearly surpassing the other, suggesting that both manages to carry good signals for downstream SIG construction that also might be overlapping. Figure~\ref{fig:hops_hits50} presents results for various SIG orders. Here we can clearly see that between different datasets, there are different optimal choices - which is an expected results and a key limitation for the GFM settings in advance - different structure requires different receptive fields for the representation learning, which is a product of the SIG order for \SIGIL. Figures~\ref{fig:down_hits50} and \ref{fig:sig_hits50} present results for different number of SIG and downstream layers in \SIGILLP. while there is a general improvement with more layers, which creates a more parameterized model, some datasets does not benefit from more layers and performance can degrade, which ties into the insight from the hops experiment - a priori, different GFMs with different structural properties and receptive fields might behave differently in downstream tasks. 

\begin{figure}[tbph]
    \centering
    \includegraphics[width=0.75\linewidth]{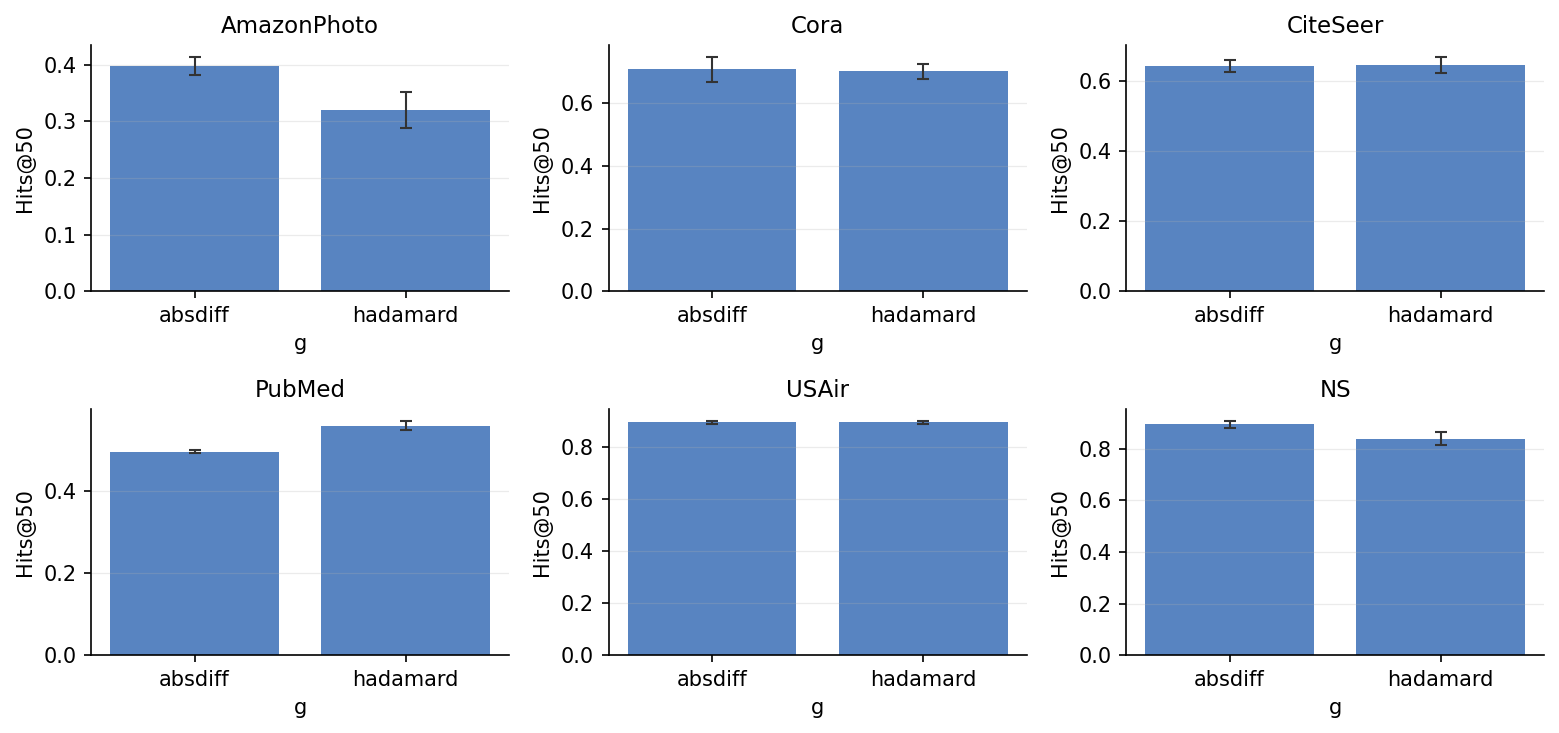}
    \caption{Interaction operator $g$ vs. Hits@50}
    \label{fig:g_hits50}
\end{figure}




\begin{figure}[htbp]
    \centering
    \begin{subfigure}[b]{0.48\textwidth}
        \centering
        \includegraphics[width=\textwidth]{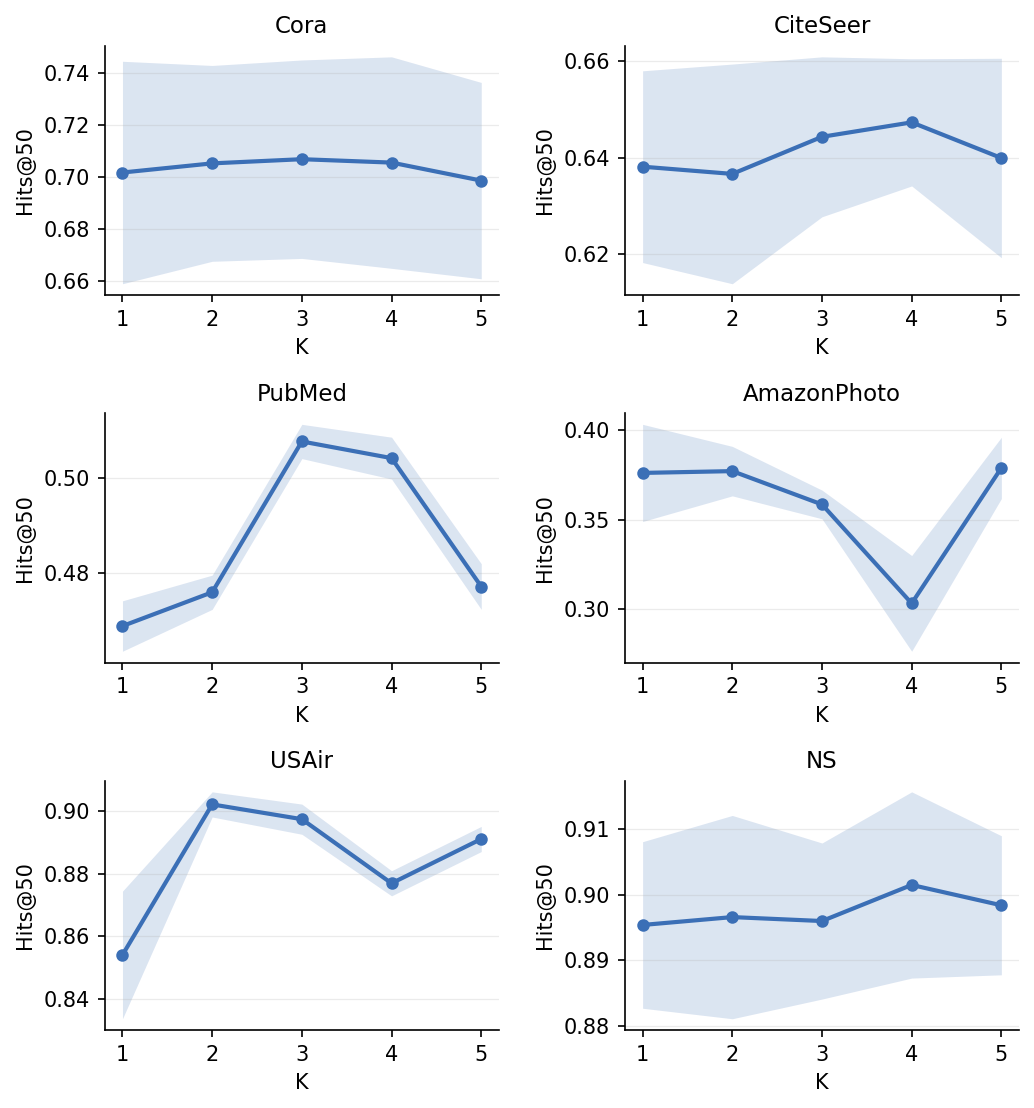}
        \caption{Number of hops $k$ vs. Hits@50}
        \label{fig:hops_hits50}
    \end{subfigure}
    \hfill 
    \begin{subfigure}[b]{0.48\textwidth}
        \centering
        \includegraphics[width=\textwidth]{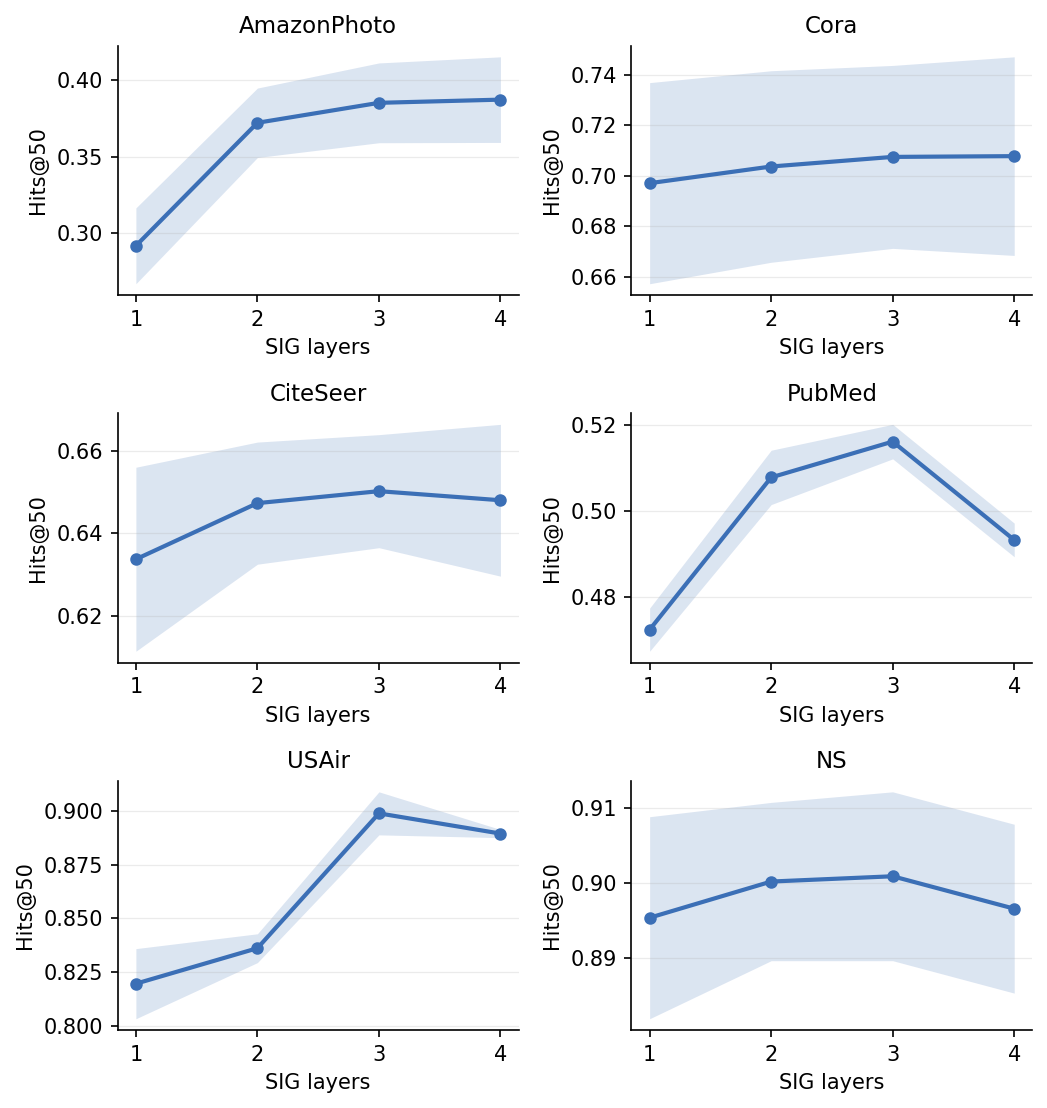}
        \caption{Number of SIG NBNet layers vs. Hits@50}
        \label{fig:sig_hits50}
    \end{subfigure}

    \begin{subfigure}[b]{0.48\textwidth}
        \centering
        \includegraphics[width=\textwidth]{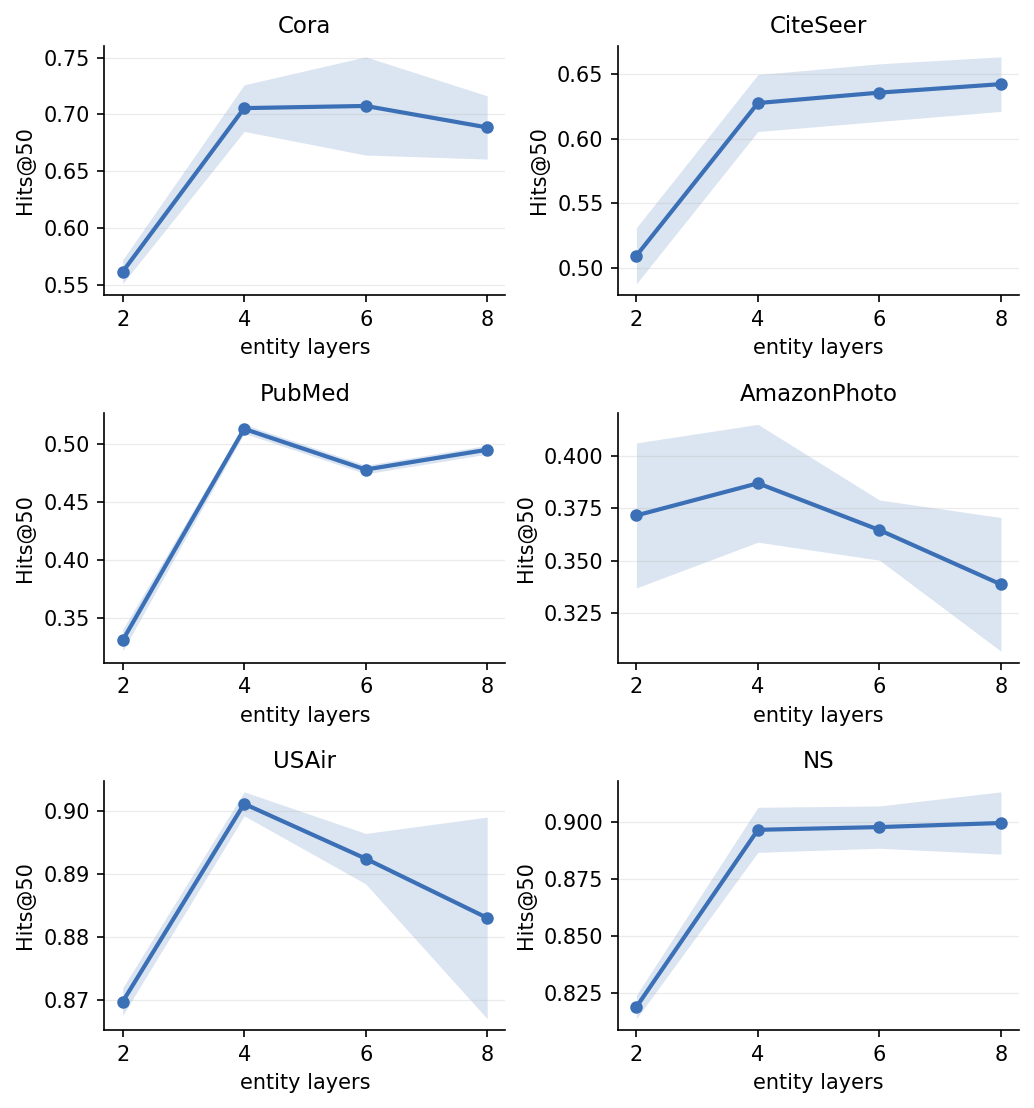}
        \caption{Number of Downstream NBFNet layers vs. Hits@50}
        \label{fig:down_hits50}
    \end{subfigure}
    \label{fig:main_figure}
\end{figure}

\subsection{Trainsize for Downstream MLP in \SIGILNC}
As discussed in~\ref{sec:experiments}, \SIGIL effectivly compresses node features of various dimensions to a fixed-size space, and we hypothesized that it is a possible cause for \SIGILNC relative weakness in NC against methods that do not compress the features. We ask whether providing the downstream MLP that adapts the fixed-size representations to the downstream label space more supervision will cause improvements. To answer this question, we conduct the following experiment. We take the pretrained checkpoint on Cora, and for each dataset in the NC evaluation protocol we train multiple MLPs with growing sizes of supervision, going from 1 train sample per class to $\min(500, \frac{\text{\#train+val}}{C})$ and inference on test. Results are presented in figure~\ref{fig:trainsize}.

\begin{figure}[tbph]
    \centering
    \includegraphics[width=0.95\linewidth]{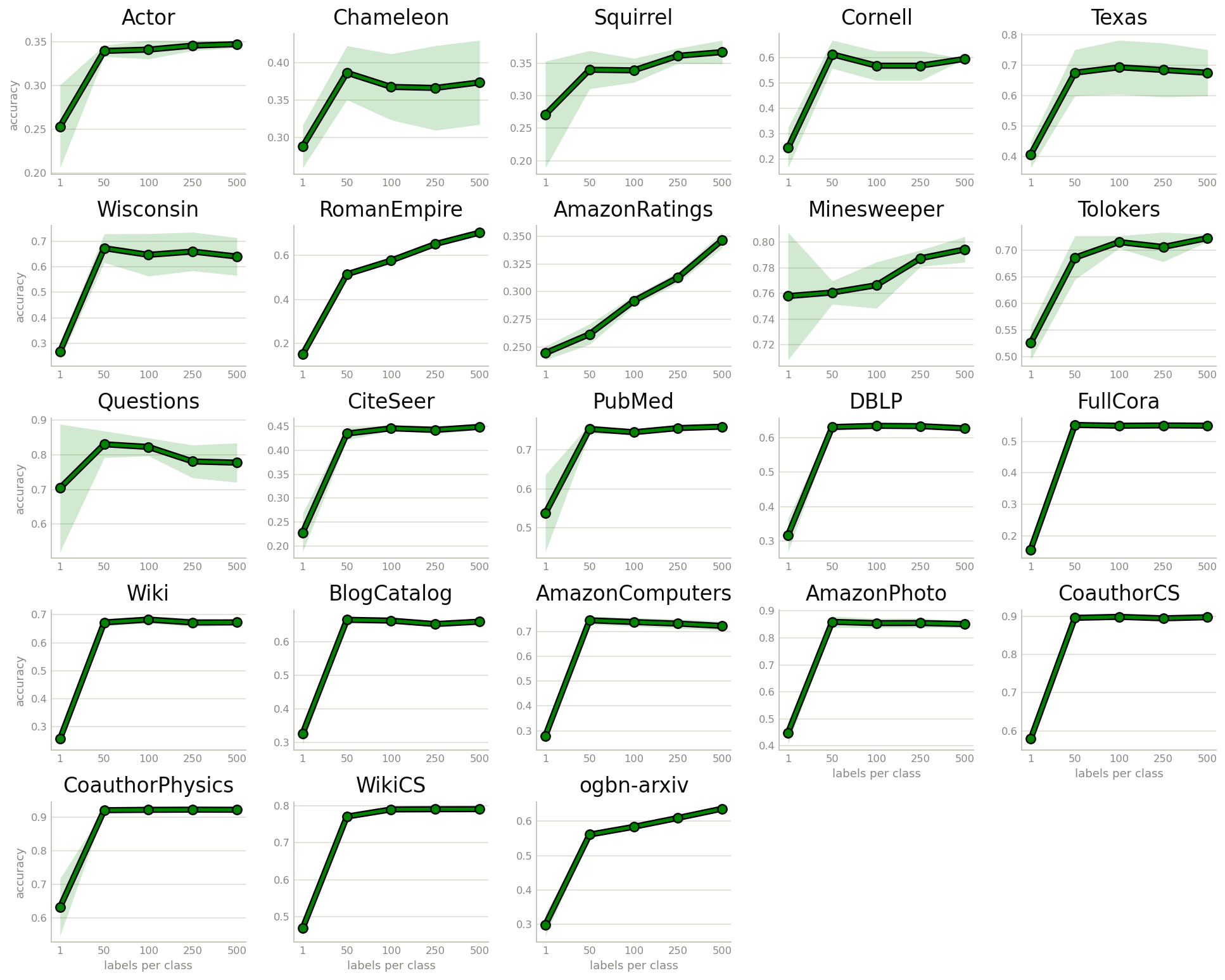}
    \caption{Train size supervision for downstream MLP}
    \label{fig:trainsize}
\end{figure}

\subsection{Visualizations of \SIGILNC Learned Space}
Figure~\ref{fig:embed_viz} presents t-SNE plots of the embeddings space learned by \SIGILNC, against embeddings from a transductive GAT~\cite{velickovic2018graph}. We can see that \SIGILNC manages to outputs meaningful representation which are coherent for the classification tasks even without training of the downstream graphs, and even shows robustness in the embeddings towards more heterophilic graphs, which the GAT has a bias against (based on the known smoothing property of plain message passing).
\begin{figure}[tbph]
    \centering
    \includegraphics[width=0.85\linewidth]{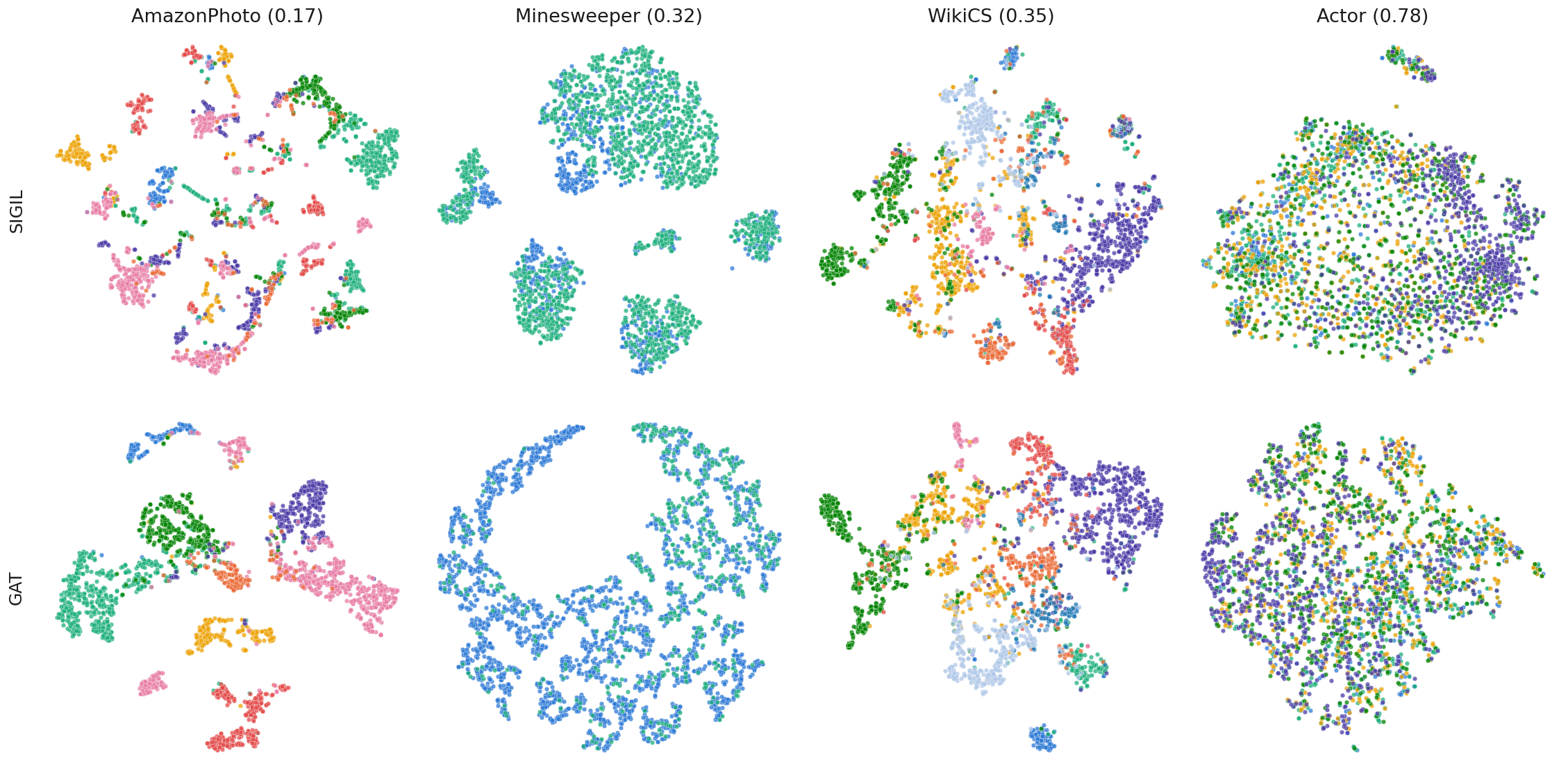}
    \caption{t-SNE plots of \SIGILNC learned space. Datasets are ordered based on heterophiliy.}
    \label{fig:embed_viz}
\end{figure}
\newpage
\section{Dataset Statistics}\label{sec:dataset_statistics}
\begin{table}[h]
\centering
\caption{Statistics and sources for the link prediction benchmarks used in §5.1.
Edge counts are undirected ($|E|$). For non-attributed graphs, node features are
initialized via DeepWalk on train edges and are not innate to the dataset.}
\label{tab:lp_dataset_stats}
\small
\begin{tabular}{lrrrrrl}
\toprule
Dataset & \#Nodes & \#Edges & Avg. deg. & Max. deg. & \#Features & Source \\
\midrule
\multicolumn{7}{l}{\textit{Attributed}} \\
Cora            &  2{,}708 &  5{,}278  &  3.90 &   168 & 1{,}433 & \cite{3045390.3045396} \\
CiteSeer        &  3{,}327 &  4{,}552  &  2.74 &    99 & 3{,}703 & \cite{3045390.3045396} \\
PubMed          & 19{,}717 & 44{,}324  &  4.50 &   171 &    500  & \cite{3045390.3045396} \\
AmazonComputers & 13{,}752 & 245{,}861 & 35.76 & 2{,}992 &   767 & \cite{shchur2019pitfallsgraphneuralnetwork} \\
AmazonPhoto     &  7{,}650 & 119{,}081 & 31.13 & 1{,}434 &   745 & \cite{shchur2019pitfallsgraphneuralnetwork} \\
CoauthorCS      & 18{,}333 &  81{,}894 &  8.93 &   136 & 6{,}805 & \cite{shchur2019pitfallsgraphneuralnetwork} \\
CoauthorPhysics & 34{,}493 & 247{,}962 & 14.38 &   382 & 8{,}415 & \cite{shchur2019pitfallsgraphneuralnetwork} \\
\midrule
\multicolumn{7}{l}{\textit{Non-attributed}} \\
C.ele           &    297  &  2{,}148  & 14.46 &   134 & -- & \cite{zhang2018link} \\
USAir           &    332  &  2{,}126  & 12.81 &   139 & -- & \cite{zhang2018link} \\
PB              &  1{,}222 & 16{,}714  & 27.36 &   351 & -- & \cite{zhang2018link} \\
NS              &  1{,}589 &  2{,}742  &  3.45 &    34 & -- & \cite{zhang2018link} \\
\bottomrule
\end{tabular}
\end{table}
\begin{table*}[h!]
  \caption{Statistics of the 44 knowledge graphs used in the KG reasoning
  experiment: 3 pretraining graphs, 23 inductive $(e,r)$ graphs (unseen entities
  and relations), and 18 inductive $(e)$ graphs (unseen entities).}
  \label{tab:kg_stats}
  \centering
  \scriptsize
  \setlength{\tabcolsep}{4pt}
  \begin{tabular}{@{}llrrrrrl@{}}
    \toprule
    \textbf{Group} & \textbf{Dataset} & \textbf{Ent.\ train} & \textbf{Ent.\ infer} & \textbf{Rel.\ train} & \textbf{Rel.\ infer} & \textbf{Test} & \textbf{Source} \\
    \midrule
    \multicolumn{8}{@{}l}{\emph{Pretraining}} \\
    Pretr. & FB15k-237       & 14{,}541 & 14{,}541 & 237   & 237   & 20{,}466 & \cite{toutanova-chen-2015-observed} \\
    Pretr. & WN18RR          & 40{,}943 & 40{,}943 & 11    & 11    & 3{,}134  & \cite{3504035.3504256} \\
    Pretr. & CoDEx-Medium    & 17{,}050 & 17{,}050 & 51    & 51    & 10{,}311 & \cite{safavi-koutra-2020-codex} \\
    \midrule
    \multicolumn{8}{@{}l}{\emph{Inductive $(e,r)$}} \\
    (e,r) & FB-25       & 5{,}190  & 4{,}097  & 163 & 216 & 5{,}716  & \cite{lee2023ingram} \\
    (e,r) & FB-50       & 5{,}190  & 4{,}445  & 153 & 205 & 3{,}879  & \cite{lee2023ingram} \\
    (e,r) & FB-75       & 4{,}659  & 2{,}792  & 134 & 186 & 3{,}106  & \cite{lee2023ingram} \\
    (e,r) & FB-100      & 4{,}659  & 2{,}624  & 134 & 77  & 2{,}329  & \cite{lee2023ingram} \\
    (e,r) & WK-25       & 12{,}659 & 3{,}228  & 47  & 74  & 1{,}131  & \cite{lee2023ingram} \\
    (e,r) & WK-50       & 12{,}022 & 9{,}328  & 72  & 93  & 3{,}225  & \cite{lee2023ingram} \\
    (e,r) & WK-75       & 6{,}853  & 2{,}722  & 52  & 65  & 1{,}144  & \cite{lee2023ingram} \\
    (e,r) & WK-100      & 9{,}784  & 12{,}136 & 67  & 37  & 4{,}496  & \cite{lee2023ingram} \\
    (e,r) & NL-0        & 1{,}814  & 2{,}026  & 134 & 112 & 763      & \cite{lee2023ingram} \\
    (e,r) & NL-25       & 4{,}396  & 2{,}146  & 106 & 120 & 744      & \cite{lee2023ingram} \\
    (e,r) & NL-50       & 4{,}396  & 2{,}335  & 106 & 119 & 859      & \cite{lee2023ingram} \\
    (e,r) & NL-75       & 2{,}607  & 1{,}578  & 96  & 116 & 607      & \cite{lee2023ingram} \\
    (e,r) & NL-100      & 1{,}258  & 1{,}709  & 55  & 53  & 793      & \cite{lee2023ingram} \\
    (e,r) & Metafam     & 1{,}316  & 656      & 28  & 28  & 184      & \cite{zhou2023multitaskperspectivelinkprediction} \\
    (e,r) & FBNELL      & 4{,}636  & 4{,}752  & 100 & 183 & 597      & \cite{zhou2023multitaskperspectivelinkprediction} \\
    (e,r) & MT1 tax     & 10{,}000 & 10{,}000 & 10  & 9   & 1{,}834  & \cite{zhou2023multitaskperspectivelinkprediction} \\
    (e,r) & MT1 health  & 10{,}000 & 10{,}000 & 7   & 7   & 1{,}566  & \cite{zhou2023multitaskperspectivelinkprediction} \\
    (e,r) & MT2 org     & 10{,}000 & 10{,}000 & 10  & 11  & 2{,}441  & \cite{zhou2023multitaskperspectivelinkprediction} \\
    (e,r) & MT2 sci     & 10{,}000 & 10{,}000 & 16  & 16  & 1{,}650  & \cite{zhou2023multitaskperspectivelinkprediction} \\
    (e,r) & MT3 art     & 10{,}000 & 10{,}000 & 45  & 45  & 3{,}113  & \cite{zhou2023multitaskperspectivelinkprediction} \\
    (e,r) & MT3 infra   & 10{,}000 & 10{,}000 & 24  & 27  & 2{,}405  & \cite{zhou2023multitaskperspectivelinkprediction} \\
    (e,r) & MT4 sci     & 10{,}000 & 10{,}000 & 42  & 42  & 1{,}388  & \cite{zhou2023multitaskperspectivelinkprediction} \\
    (e,r) & MT4 health  & 10{,}000 & 10{,}000 & 21  & 20  & 1{,}703  & \cite{zhou2023multitaskperspectivelinkprediction} \\
    \midrule
    \multicolumn{8}{@{}l}{\emph{Inductive $(e)$}} \\
    (e)   & FB v1       & 1{,}594  & 1{,}993  & 180 & 180 & 411      & \cite{teru2020inductive} \\
    (e)   & FB v2       & 2{,}608  & 4{,}145  & 200 & 200 & 947      & \cite{teru2020inductive} \\
    (e)   & FB v3       & 3{,}668  & 7{,}406  & 215 & 215 & 1{,}731  & \cite{teru2020inductive} \\
    (e)   & FB v4       & 4{,}707  & 11{,}714 & 219 & 219 & 2{,}840  & \cite{teru2020inductive} \\
    (e)   & WN v1       & 2{,}746  & 1{,}618  & 9   & 9   & 373      & \cite{teru2020inductive} \\
    (e)   & WN v2       & 6{,}954  & 4{,}011  & 10  & 10  & 852      & \cite{teru2020inductive} \\
    (e)   & WN v3       & 12{,}078 & 6{,}327  & 11  & 11  & 1{,}143  & \cite{teru2020inductive} \\
    (e)   & WN v4       & 3{,}861  & 12{,}334 & 9   & 9   & 2{,}823  & \cite{teru2020inductive} \\
    (e)   & NELL v1     & 3{,}103  & 833      & 14  & 14  & 201      & \cite{teru2020inductive} \\
    (e)   & NELL v2     & 2{,}564  & 4{,}586  & 88  & 88  & 935      & \cite{teru2020inductive} \\
    (e)   & NELL v3     & 4{,}647  & 8{,}048  & 142 & 142 & 1{,}620  & \cite{teru2020inductive} \\
    (e)   & NELL v4     & 2{,}092  & 7{,}073  & 76  & 76  & 1{,}447  & \cite{teru2020inductive} \\
    (e)   & ILPC Small  & 10{,}230 & 6{,}653  & 48  & 48  & 2{,}902  & \cite{galkin2022openchallengeinductivelink} \\
    (e)   & ILPC Large  & 46{,}626 & 29{,}246 & 65  & 65  & 10{,}184 & \cite{galkin2022openchallengeinductivelink} \\
    (e)   & HM 1k       & 36{,}237 & 9{,}899  & 11  & 11  & 476      & \cite{ijcai2017p250}7 \\
    (e)   & HM 3k       & 32{,}118 & 19{,}218 & 11  & 11  & 1{,}349  & \cite{galkin2022openchallengeinductivelink} \\
    (e)   & HM 5k       & 28{,}601 & 23{,}792 & 11  & 11  & 2{,}124  & \cite{galkin2022openchallengeinductivelink} \\
    (e)   & HM indigo   & 12{,}721 & 14{,}775 & 229 & 229 & 14{,}904 & \cite{NEURIPS2021_0fd600c9} \\
    \bottomrule
  \end{tabular}
\end{table*}
\begin{table}[h!]
\centering
\caption{Statistics and sources for the node classification datasets used in this paper.}\label{tab:datasets_stats}
\scriptsize
\setlength{\tabcolsep}{3pt}

\begin{tabular}{lrrrrrrrrr}\toprule
\textbf{Dataset} & \textbf{\#Nodes} & \textbf{\#Edges} & \textbf{\#Feature} & \textbf{\#Classes} & \textbf{\#Labeled Nodes} & \textbf{Train/Val/Test Ratios (\%)} & \textbf{Category} & \textbf{Source} \\\midrule
AirBrazil       & 131     & 1074       & 131     & 4  & 80   & 61.1/19.1/19.8   & Homophilic    & \cite{Ribeiro_2017}          \\
Cornell          & 183     & 554      & 1703    & 5  & 87   & 47.5/32.2/20.2   & Heterophilic  & \cite{Pei2020Geom-GCN:}             \\
Texas            & 183     & 558        & 1703    & 5  & 87   & 47.5/31.7/20.2   & Heterophilic  & \cite{Pei2020Geom-GCN:}             \\
Wisconsin        & 251     & 900        & 1703    & 5  & 120  & 47.8/31.9/20.3   & Heterophilic  & \cite{Pei2020Geom-GCN:}             \\
AirEurope           & 399     & 5995       & 399     & 4  & 80   & 20.1/39.8/40.1   & Homophilic    & \cite{Ribeiro_2017}          \\
AirUSA           & 1190    & 13599      & 1190    & 4  & 80   & 6.7/46.6/46.6    & Homophilic    & \cite{Ribeiro_2017}          \\
Chameleon        & 2277    & 36101      & 2325    & 5  & 1092 & 48.0/32.0/20.0   & Heterophilic  & \cite{Pei2020Geom-GCN:}             \\
Wiki             & 2405    & 17981      & 4973    & 17 & 340  & 14.1/42.9/43.0   & Homophilic    & \cite{Yang_2023}        \\
Cora             & 2708    & 10556      & 1433    & 7  & 140  & 5.2/18.5/36.9    & Homophilic    & \cite{3045390.3045396}           \\
Citeseer         & 3327    & 9104       & 3703    & 6  & 120  & 3.6/15.0/30.1    & Homophilic    & \cite{3045390.3045396}           \\
BlogCatalog      & 5196    & 343486     & 8189    & 6  & 120  & 2.3/48.8/48.8    & Homophilic    & \cite{Yang_2023}        \\
Squirrel         & 5201    & 217073     & 2089    & 5  & 2496 & 48.0/32.0/20.0   & Heterophilic  & \cite{Pei2020Geom-GCN:}             \\
Actor            & 7600    & 30019      & 932     & 5  & 3648 & 48.0/32.0/20.0   & Heterophilic  & \cite{Pei2020Geom-GCN:}             \\
AmzPhoto         & 7650    & 238162     & 745     & 8  & 160  & 2.1/49.0/49.0    & Homophilic    & \cite{shchur2019pitfallsgraphneuralnetwork}   \\
Minesweeper      & 10000   & 78804      & 7       & 2  & 5000 & 50.0/25.0/25.0   & Heterophilic  & \cite{platonov2023critical}\\
WikiCS           & 11701   & 431206     & 300     & 10 & 580  & 5.0/15.1/49.9    & Homophilic    & \cite{mernyei2022wikicswikipediabasedbenchmarkgraph}     \\
Tolokers         & 11758   & 1038000    & 10      & 2  & 5879 & 50.0/25.0/25.0   & Heterophilic  & \cite{platonov2023critical}\\
AmzComp          & 13752   & 491722     & 767     & 10 & 200  & 1.5/49.3/49.3    & Homophilic    & \cite{shchur2019pitfallsgraphneuralnetwork}   \\
DBLP             & 17716   & 105734     & 1639    & 4  & 80   & 0.5/49.8/49.8    & Homophilic    & \cite{bojchevski2018deep}  \\
CoCS             & 18333   & 163788     & 6805    & 15 & 300  & 1.6/49.2/49.2    & Homophilic    & \cite{shchur2019pitfallsgraphneuralnetwork}   \\
Pubmed           & 19717   & 88648      & 500     & 3  & 60   & 0.3/2.5/5.1      & Homophilic    & \cite{3045390.3045396}           \\
FullCora         & 19793   & 126842     & 8710    & 70 & 1400 & 7.1/46.5/46.5    & Homophilic    & \cite{bojchevski2018deep}  \\
Roman Empire     & 22662   & 65854      & 300     & 18 & 11331& 50.0/25.0/25.0   & Heterophilic  & \cite{platonov2023critical}\\
Amazon Ratings   & 24492   & 186100     & 300     & 5  & 12246& 50.0/25.0/25.0   & Heterophilic  & \cite{platonov2023critical}\\
CoPhysics        & 34493   & 495924     & 8415    & 5  & 100  & 0.3/49.9/49.9    & Homophilic    & \cite{shchur2019pitfallsgraphneuralnetwork}   \\
Questions        & 48921   & 307080     & 301     & 2  & 24460& 50.0/25.0/25.0   & Heterophilic  & \cite{platonov2023critical}\\
ogbn-arxiv            & 169343  & 1166243    & 128     & 40 & 90941& 53.7/17.6/28.7   & Homophilic    & \cite{hu2020ogb}                 \\

\bottomrule
\end{tabular}
\end{table}
}{}
\end{document}